\documentclass[letterpaper, 10 pt, conference]{ieeeconf}  

\IEEEoverridecommandlockouts                              
\let\labelindent\relax
\usepackage{enumitem}
\usepackage{graphics} 
\usepackage{epsfig} 
\usepackage{amsmath} 
\usepackage{amssymb}  

\usepackage{amsthm}
\usepackage{bm}
\usepackage{algorithm}
\usepackage{algorithmic}
\usepackage{cite}
\usepackage{dsfont}
\usepackage[citecolor=blue, colorlinks=true, linkcolor=blue, urlcolor=black]{hyperref} 
\newcommand\numberthis{\addtocounter{equation}{1}\tag{\theequation}}

\makeatletter
\newcommand\fs@betterruled{%
  \def\@fs@cfont{\bfseries}\let\@fs@capt\floatc@ruled
  \def\@fs@pre{\vspace*{5pt}\hrule height.8pt depth0pt \kern2pt}%
  \def\@fs@post{\kern2pt\hrule\relax}%
  \def\@fs@mid{\kern2pt\hrule\kern2pt}%
  \let\@fs@iftopcapt\iftrue}
\floatstyle{betterruled}
\restylefloat{algorithm}
\makeatother

\theoremstyle{plain}
\newtheorem{theorem}{Theorem}
\newtheorem{lemma}[theorem]{Lemma}  
\newtheorem{corollary}[theorem]{Corollary}

\theoremstyle{definition}

\newtheorem{assumption}[theorem]{Assumption}

\newtheorem{remark}[theorem]{Remark}

\usepackage[svgnames]{xcolor}

\title{\LARGE \bf
Large Stepsizes Federated Learning on Logistic Regression with Linearly Separable Data: The Case of Heterogeneous Devices
}

\author{Hok Fong Wong, Hoi-To Wai, and Chung-Yiu Yau
\thanks{Hok Fong Wong is with Department of CSE, The Chinese University of Hong Kong, Hong Kong SAR
      {\tt\small hfwong2@cse.cuhk.edu.hk}.}%
\thanks{Hoi-To Wai is with Department of SEEM, The Chinese University of Hong Kong, Hong Kong SAR
      {\tt\small htwai@se.cuhk.edu.hk}.}%
\thanks{Chung-Yiu Yau is with Department of ECE, University of Minnesota, Minnesota, USA
      {\tt\small cyau@umn.edu}.}%
}

\begin{document}

\maketitle
\thispagestyle{empty}
\pagestyle{empty}

\begin{abstract}
This paper revisits the distributed learning problem for training a multinomial logistic regression model with the Federated Averaging ({\tt FedAvg}) algorithm. We concentrate on a scenario with arbitrarily large stepsizes and heterogeneous update rules where the devices may perform a different number of local updates in each round. We show that, with linearly separable data, {\tt FedAvg} is stable with any stepsizes and the objective values converge to zero at the rate of ${\cal O}(1/R)$, where $R$ is the number of communication rounds. Our result also demonstrates that the effects of device heterogeneity vanish asymptotically. For sufficiently large $R$, the objective values decrease monotonically and is bounded by ${\cal O}( 1 / (R T_{\rm avg}))$, where $T_{\rm avg}$ is the average number of local update steps per communication round across devices. Numerical experiments support our findings. 
\end{abstract}

\section{Introduction} 
Distributed optimization has been a major workhorse in multi-agent control and learning systems \cite{tsitsiklis2003distributed, nedic2018distributed, yang2019survey}, where devices (a.k.a.~agents) collaboratively optimize a global objective function by interleaving between local computation and communication. In recent years, federated learning (FL) has gained popularity as a mainstream distributed optimization paradigm as it offers a disciplined way to account for deployment reality, where heterogeneous devices differing in hardware, battery, connectivity operates in an asynchronous manner \cite{mcmahan2017communication, kairouz2021advances}. 
Such aspect has become increasingly relevant to multi-agent systems  \cite{weber2024combining} which are intrinsically heterogeneous due to hardware differences. FL has been successfully applied in problems such as system identification \cite{wang2023fedsysid, chen2023multi}, reinforcement learning and control \cite{liu2023distributed}.

The challenges of heterogeneous devices have set the study of FL apart from traditional distributed optimization by workers performing local updates before global aggregation of their local solutions, introducing \emph{local drift} error that results from data and computational heterogeneity. 
The issue of local drift in FL has inspired the studies of algorithms such as {\tt FedAvg} \cite{li2019convergence, li2020federated,wang2021novel}, {\tt SCAFFOLD} \cite{karimireddy2020scaffold}, {\tt FedPD} \cite{zhang2021fedpd}, and {\tt FedProx} \cite{yuan2022convergence}. For instance, {\tt FedAvg} relies on a simple averaging of local updates, and {\tt SCAFFOLD}, {\tt FedPD}, {\tt FedProx} utilize control variates or proximal updates to reduce local drift errors. Convergence is guaranteed together with sufficiently small stepsizes with respect to (w.r.t.) problem-specific parameters such as smoothness. 

On the other hand, several recent researches found that gradient algorithms may still converge with large stepsizes \cite{wu2024large, wu2025large, tyurin2025logistic, meng2024gradient}. These pioneering works showed that for logistic regression with linearly separable data, gradient descent with arbitrarily large stepsizes stops oscillating after a transient phase, and even exhibits an accelerated convergence rate. This marks a significant departure from the classical analysis that demands small stepsizes to ensure monotonic descent.

In light of the above developments, this paper investigates the impact of large stepsizes on FL. We revisit the convergence of {\tt FedAvg} with large stepsizes for logistic regression with linearly separable data. We are particularly interested in the effects of device computational heterogeneity, i.e., when the number of local update steps varies across devices, on the robustness of {\tt FedAvg} with large stepsizes. Notice that a closely related work is the recent paper \cite{crawshaw2025constant} which studied a special case of {\tt FedAvg} with only homogeneous devices taking the same number of local updates. We inquire:
\begin{center}
    \textit{Does the convergence of {\tt FedAvg} with large stepsizes still hold under heterogeneous local updates, and how does device heterogeneity affect the convergence bound?}
\end{center}

The practical relevance of our study extends further to federated control with reinforcement learning (FCRL) where a central server coordinates distributed agents to learn policies without sharing sensitive experience data \cite{kumar2017federated, jin2022federated}. Our setting captures the core update structure of federated softmax policy gradient methods under computational heterogeneity. From a networked control perspective, our setup models a high-gain feedback loop with heterogeneous actuators with different duty cycles. It is thus natural to consider whether the closed loop remains stable, and how the resulting dynamics depend on the spread of the numbers of local update steps.

To address the above question, we notice that the introduction of \emph{heterogeneous devices} in local updates in {\tt FedAvg} has led to a nontrivial complex dynamics with large stepsizes. Our main contributions are summarized below as: 
\begin{itemize}[leftmargin=*]
      \item Through utilizing a new proof technique, we show that {\tt FedAvg} with large stepsizes and heterogeneous devices is stable and the objective values converge to zero at ${\cal O}(1/R)$, where $R$ is the number of communication rounds.
      \item We demonstrate that the effects of device heterogeneity on the convergence rate vanish asymptotically. For $R \gg 1$, the convergence of optimality gap in objective values behaves as ${\cal O}( 1 / (R T_{\rm avg}))$, where $T_{\rm avg}$ is the average number of local update steps per communication round.
      \item Our numerical experiments demonstrate that the asymptotic convergence of {\tt FedAvg} with large stepsizes is not sensitive to the degree of device heterogeneity, corroborating with our theoretical findings. 
\end{itemize} 
We also developed our analysis for the multinomial logistic regression setting, extending from prior works such as \cite{wu2024large}.\vspace{.1cm}

\noindent {\bf Notations.} For a vector $\mathbf{v}$, $\lVert \mathbf{v} \rVert$ denotes its $\ell_2$-norm. For a matrix $\mathbf{A}$, $\lVert \mathbf{A} \rVert_F$ and $\lVert \mathbf{A} \rVert_{\text{op}}$ denote its Frobenius norm and operator norm, respectively. We use $[n]$ to denote the set $\{1, 2, \ldots, n\}$ for any positive integer $n$. We use ${\cal O}(\cdot)$ to hide absolute constants independent of problem parameters.

\section{Problem Setup and FedAvg Algorithm}
Consider a distributed optimization setup with a central server and $m$ devices. Each device $i$ has a local dataset $\mathcal{D}_i = \{ ( {\bm x}_{i,j}, y_{i,j} ) \}_{j=1}^{n_i}$, where ${\bm x}_{i,j} \in \mathbb{R}^d$ is the feature vector and $y_{i,j} \in [K]$ is the corresponding label. Parameterizing the multinomial linear model by $\mathbf{W}=[\mathbf{w}_1^\top;\mathbf{w}_2^\top; \ldots; \mathbf{w}_K^\top]\in\mathbb{R}^{K\times d}$, our goal is to tackle the following problem:
\begin{equation}
      \label{eq:opt}
      \inf_{ {\bf W} \in \mathbb{R}^{K \times d} } {\cal L}({\bf W}) := \frac{1}{n} \sum_{i=1}^m n_i {\cal L}_i( {\bf W} ),
\end{equation}
where $n := \sum_{i=1}^m n_i$, ${\cal L}_i( {\bf W} ) := \frac{1}{n_i} \sum_{j=1}^{n_i} \ell_{i,j}( {\bf W} )$ with
\[
\ell_{i,j}( {\bf W} ) := \log \left( \frac{1}{ \widehat{y}_{i,j | {\bf W}}} \right),~\widehat{y}_{i,j | {\bf W}} := \frac{ \exp( {\bf w}_{y_{i,j}}^\top {\bm x}_{i,j} ) }{ \sum_{k=1}^K \exp( {\bf w}_k^\top {\bm x}_{i,j} ) }.
\]
Notice that $\ell_{i,j}( {\bf W} ) \geq 0$ is the cross-entropy loss for the $j$-th data point on device $i$ and tackling \eqref{eq:opt} trains a $K$-class linear classifier ${\bf W}$. 

In this work, we concentrate on the federated averaging (\texttt{FedAvg}) algorithm \cite{mcmahan2017communication} with heterogeneous local updates. As summarized in Algorithm \ref{alg:fedavg}, the algorithm mimicks the behavior of centralized gradient descent while allowing the devices to perform local update using only their own data. The difference matrix $\Delta {\bf D}_i^{(r)}$ is controlled by the local step size $\eta_l$ and accumulates the local updates from device $i$ in each round. This difference matrix will be sent to the server at round $r$, where the latter aggregates the updates with a global stepsize $\eta_g$. Depending on the discrepencies in hardware and network conditions, the heterogeneous devices may perform different number of local updates $T_i$ in each communication round. We assume that $T_i \geq 1$ for all $i$. Notice that if $T_i = 1$ for all $i$, then the evolution of $\{ {\bf W}^{(r)} \}_{r \geq 0}$ in {\tt FedAvg} reduces to that of the (centralized) gradient descent algorithm on minimizing ${\cal L}( {\bf W})$.

\begin{algorithm}[t]
\caption{{\tt FedAvg} Algorithm}
\label{alg:fedavg}
\begin{algorithmic}[1]
\STATE {\bfseries Input:} Initial model ${\bf W}^{(0)}$, stepsizes $\eta_{l} > 0, \eta_g > 0$, maximum number of communication rounds $R$.
\FOR{$r=1$ to $R$}
\STATE Server broadcasts ${\bf W}^{(r-1)}$ to all devices.
\FOR{each device $i$ in parallel}
\STATE Set ${\bf D}_{i,0}^{(r)} = {\bf W}^{(r-1)}$.
\FOR{$t=0$ to $T_i-1$} 
\STATE Update ${\bf D}_{i,t+1}^{(r)} = {\bf D}_{i,t}^{(r)} - \eta_{l} \nabla {\cal L}_{i}( {\bf D}_{i,t}^{(r)} )$.
\ENDFOR
\STATE {Send} $\Delta {\bf D}_i^{(r)} = {\bf D}_{i,T_i}^{(r)} - {\bf W}^{(r-1)}$ to server.
\ENDFOR
\STATE Server updates ${\bf W}^{(r)} = {\bf W}^{(r-1)} +  {\eta_g} \sum_{i=1}^m \frac{n_i}{n} \Delta {\bf D}_i^{(r)}$.
\ENDFOR
\end{algorithmic}
\end{algorithm}

The convergence of {\tt FedAvg} has been widely studied, e.g., in \cite{li2019convergence, karimireddy2020scaffold, li2020federated, wang2021novel}. A main technical challenge is to account for the device drift behavior when $T_i \geq 2$ and the resultant device heterogeneity that differentiate the algorithm from that of centralized gradient descent. As such, the majority of existing analysis focused on the small stepsizes regime with stochastic gradients and identical number of local steps $T_i = T$ for all $i$. For instance, \cite[Theorem I]{karimireddy2020scaffold} requires that $\eta_g \eta_{l} = {\cal O}( 1/ (L \max_i T_i) )$, where $L$ is the smoothness constant of ${\cal L}_i(\cdot)$, and \cite{wang2021novel} demonstrated that when $T_i \neq T_j$ for some $i \neq j$, {\tt FedAvg} converges to a biased optimal solution depending on a reweighted objective function. On the other hand, the recent work \cite{crawshaw2025constant} showed a new convergence result of {\tt FedAvg} for arbitrarily large stepsizes applied to logistic regression with linearly separable data. However, a limitation of their analysis is that it only applies to homogeneous devices where $T_i = T$ for all $i$, as well as the binary classification setting with $K=2$.

The above motivated us to extend the analysis with large stepsize {\tt FedAvg} to heterogeneous devices. To describe the setup, our analysis relies on the following standing assumptions inspired by \cite{wu2024large,crawshaw2025constant}. We require:
\begin{assumption}[Bounded Feature Norms] \label{ass:bddf_dist}
    All data samples $\{\mathbf{x}_{i,j}, y_{i,j}\}$, where $i \in [n_j]$, satisfy $\lVert \mathbf{x}_{i,j} \rVert \leq 1$.
\end{assumption}
\begin{assumption}[Global Linear Separability] \label{ass:linsep_dist}
    There exists $\gamma > 0$, $\mathbf{W}_\star=[\mathbf{w}_{\star,1}^\top; \mathbf{w}_{\star,2}^\top; \ldots; \mathbf{w}_{\star,K}^\top]$ with $\lVert \mathbf{W}_{\star}\rVert_F \leq 1$ such that for all data samples $(\mathbf{x}_{i,j}, y_{i,j})$ and $k \neq y_{i,j}$, 
    \[
    (\mathbf{w}_{\star,y_{i,j}}-\mathbf{w}_{\star,k})^\top\mathbf{x}_{i,j}\geq \gamma,~\forall~i,j. \vspace{.2cm}
    \] 
\end{assumption}
Assumption \ref{ass:bddf_dist} is a standard regularity condition to avoid potentially unbounded feature vectors. On the other hand, Assumption \ref{ass:linsep_dist} requires that the training data samples are linearly separable by a global solution matrix ${\bf W}_\star$. Note that this implies $\inf_{ {\bf W} } {\cal L}_i({\bf W}) = 0$ for all $i$ and $\inf_{ {\bf W} } {\cal L}({\bf W}) = 0$, though the latter may only be approximated by a scaled version of ${\bf W}_\star$. We remark that the linearly separability condition is essential for large-stepsize convergence to the minimizers\cite{wu2024large,crawshaw2025constant, tyurin2025logistic,wu2025large}, as the non-separable case is known to admit stable cycles, see \cite{meng2024gradient, meng2025gradient} for recent developments on centralized gradient descent.

\section{Large Stepsize Convergence Analysis}
This section presents the main convergence results for large stepsize {\tt FedAvg} and provide details for their proofs.

Define the following constants on the number of local update steps taken by the devices
    \[
    T_{\rm avg} :=\sum_{i=1}^m \frac{n_i T_i}{n},~~ T_{\max}:=\max_{i\in[m]} T_i,~~ T_{\min}:=\min_{i\in[m]} T_i.
    \]
We also set $\bar{\eta} := \eta_g \eta_l$ as the product of local and global stepsizes.
Note that $1 \leq T_{\min} \leq T_{\rm avg} \leq T_{\max}$.
Our first result is the following bounds on the average risk and magnitude of the decision variable, i.e., 
\begin{theorem}\label{thm:eos_average_loss_bound}
    Under Assumptions \ref{ass:bddf_dist}, \ref{ass:linsep_dist}. For
    every $\eta_g, \eta_l>0$ and $R \geq 1$, we have
    \begin{align}
        &\frac{1}{R}\sum\limits_{r=0}^{R-1}\mathcal{L}(\mathbf{W}^{(r)})\leq \frac{\lVert\mathbf{W}^{(0)}\rVert_F^2}{\bar{\eta}R T_{\min}} \label{eq:risk_bd}\\
        & + \frac{2(1 + \log^2(\gamma^2\bar{\eta} R(K-1)T_{\rm avg})+\bar{\eta}^2T_{\rm avg}^2+{\eta}_l^4 T_{\max}^4)}{\bar\eta \, \gamma^2 R T_{\min}}. \nonumber
    \end{align}
    Moreover,
    \begin{align}
    \lVert \mathbf{W}^{(R)}\rVert_F &\leq \lVert\mathbf{W}^{(0)}\rVert_F +\frac{\sqrt{2}+\log(\gamma^2\bar{\eta} R(K-1)T_{\rm avg})}{\gamma} \nonumber \\
    &\quad +\frac{\bar{\eta}T_{\rm avg}+{\eta}_l^2 T_{\max}^2}{\gamma}. \label{eq:param_bound}
    \end{align}
\end{theorem}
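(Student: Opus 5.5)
We will use the split-comparator argument of Wu et al.\ for large-stepsize gradient descent, adapted to {\tt FedAvg} by viewing one round as a perturbed gradient step. The comparator is $\mathbf{U} := \mathbf{U}_1 + \mathbf{U}_2$ with $\mathbf{U}_1 := \frac{\log(\gamma^2 \bar\eta R (K-1) T_{\rm avg})}{\gamma}\mathbf{W}_\star$ and $\mathbf{U}_2 := \frac{\bar\eta T_{\rm avg} + \eta_l^2 T_{\max}^2}{\gamma}\mathbf{W}_\star$, plus a $\sqrt2/\gamma$ scaling absorbed into $\mathbf{U}_1$. We track $\lVert \mathbf{W}^{(r)} - \mathbf{U}\rVert_F^2$ and expand
\[
\lVert \mathbf{W}^{(r)}-\mathbf{U}\rVert_F^2 = \lVert \mathbf{W}^{(r-1)}-\mathbf{U}\rVert_F^2 + 2\eta_g\Big\langle \textstyle\sum_i \tfrac{n_i}{n}\Delta\mathbf{D}_i^{(r)}, \mathbf{W}^{(r-1)}-\mathbf{U}\Big\rangle + \eta_g^2\Big\lVert \textstyle\sum_i\tfrac{n_i}{n}\Delta\mathbf{D}_i^{(r)}\Big\rVert_F^2,
\]
with $\Delta\mathbf{D}_i^{(r)} = -\eta_l\sum_{t<T_i}\nabla\mathcal{L}_i(\mathbf{D}_{i,t}^{(r)})$.

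\textbf{Per-sample facts.} We first record three facts for multinomial cross-entropy.
\begin{enumerate}
\item[(a)] $\lVert\nabla \ell_{i,j}(\mathbf{W})\rVert_F \le \sqrt2\,\min\{1,\ell_{i,j}(\mathbf{W})\}$, using $\lVert \mathbf{x}\rVert\le1$.
\item[(b)] $\langle -\nabla\ell_{i,j}(\mathbf{W}), \mathbf{W}_\star\rangle \ge \gamma(1-\widehat y_{i,j|\mathbf{W}}) \ge \gamma\, g(\ell_{i,j})$, where $g(\ell)=1-e^{-\ell}$, which is $\ge \min\{\ell,1\}/2$.
\item[(c)] Convexity: $\langle\nabla\ell_{i,j}(\mathbf{W}),\mathbf{W}-\mathbf{U}_1\rangle \ge \ell_{i,j}(\mathbf{W})-\ell_{i,j}(\mathbf{U}_1)$, with $\ell_{i,j}(\mathbf{U}_1)\le (K-1)e^{-\gamma\lVert\mathbf U_1\rVert}\lesssim 1/(\gamma^2\bar\eta R T_{\rm avg})$.
\end{enumerate}

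\textbf{Cross term.} This is where heterogeneity enters. For each local iterate we use convexity against $\mathbf{U}_1$ at the point $\mathbf{D}_{i,t}^{(r)}$ rather than $\mathbf{W}^{(r-1)}$. This creates the drift correction $\langle\nabla\mathcal L_i(\mathbf D_{i,t}),\mathbf D_{i,t}-\mathbf W^{(r-1)}\rangle$, which we bound by $\eta_l\sum_{s<t}\lVert\nabla\mathcal L_i(\mathbf D_{i,s})\rVert\,\lVert\nabla\mathcal L_i(\mathbf D_{i,t})\rVert$. Using (a), this is at most $\eta_l T_i\cdot 2\,(\text{loss terms})$.

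The $\mathbf{U}_2$ part contributes, by (b), a negative term $-\frac{2\bar\eta(\bar\eta T_{\rm avg}+\eta_l^2T_{\max}^2)}{1}\sum_i\frac{n_i}{n}\sum_t g(\mathcal L_i(\mathbf D_{i,t}))$ (up to constants). This negative term is designed to cancel both the quadratic term and the drift term. By Jensen/Cauchy–Schwarz and (a):
\begin{itemize}
\item the quadratic term is at most $\eta_g^2\eta_l^2 T_{\rm avg}\sum_i\frac{n_i}{n}\sum_t 2\min\{1,\mathcal L_i\}^2$, which is dominated by $\bar\eta^2 T_{\rm avg}\sum g$;
\item the drift term is at most $\eta_g\eta_l^2 T_{\max}\sum_t\min\{1,\mathcal L_i\}$, which is handled by the $\eta_l^2T_{\max}^2$ piece of $\mathbf U_2$.
\end{itemize}
The main obstacle is exactly this bookkeeping: making the per-local-step gradient norms be controlled by $g(\mathcal{L}_i(\mathbf{D}_{i,t}))$ and not by the loss at $\mathbf{W}^{(r-1)}$, since with large $\eta_l$ local iterates can have losses much larger than at the broadcast point. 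I would first try to keep every term indexed by $(i,t)$ and never compare to $\mathbf W^{(r-1)}$. The remaining positive term is $2\eta_g\eta_l\sum_i\frac{n_i}{n}\sum_t\mathcal L_i(\mathbf D_{i,t})$.

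\textbf{Relating back to the global iterate.} Since gradient steps from convex smooth losses are monotone only for small steps, we avoid that route. We only need $\mathcal L_i(\mathbf D_{i,0})=\mathcal L_i(\mathbf W^{(r-1)})$ from the $t=0$ term, and drop the others (they are nonnegative). This yields only $T_{\min}$ in the denominator: lower-bounding $\sum_{t<T_i}$ by one term times... Precisely here one gets $\bar\eta\sum_i\frac{n_i}{n}\mathcal L_i(\mathbf W^{(r-1)})$, and the $T_{\min}$ factor would come from a uniform-in-$t$ comparison. If that fails, I would settle for the $t=0$ bound and adjust.

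\textbf{Telescoping.} Summing over $r$ gives
\[
\bar\eta T_{\min}\sum_{r<R}\mathcal L(\mathbf W^{(r)})\le \lVert\mathbf W^{(0)}-\mathbf U\rVert_F^2 + 2\bar\eta RT_{\rm avg}\max\ell(\mathbf U_1),
\]
and hence \eqref{eq:risk_bd}, using $\lVert\mathbf W^{(0)}-\mathbf U\rVert^2\le 2\lVert\mathbf W^{(0)}\rVert^2+2\lVert\mathbf U\rVert^2$ and $\lVert\mathbf U\rVert^2\le 2(\log^2+\bar\eta^2T_{\rm avg}^2+\eta_l^4T_{\max}^4)/\gamma^2$. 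The same inequality gives $\lVert\mathbf W^{(R)}-\mathbf U\rVert_F\le\lVert\mathbf W^{(0)}-\mathbf U\rVert_F+O(1/\gamma)$. The triangle inequality then yields \eqref{eq:param_bound}.
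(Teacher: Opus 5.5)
Your overall architecture matches the paper's: a comparator $\mathbf{U}_1$, plus extra multiples of $\mathbf{W}_\star$ whose Lemma~\ref{lem:linsep}-(ii) contributions cancel the quadratic and drift terms, then telescoping. However, your treatment of the cross term against $\mathbf{U}_1$ leaves a genuine gap.

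\textbf{The missing $T_{\min}$ factor.} Applying convexity at $\mathbf{D}_{i,t}^{(r)}$ leaves you with $\sum_{t<T_i}\mathcal{L}_i(\mathbf{D}_{i,t}^{(r)})$. The ``uniform-in-$t$ comparison'' with $\mathcal{L}_i(\mathbf{W}^{(r)})$ that you hope for does not exist. With a large $\eta_l$ (e.g.\ a device holding a single sample), the first local step can already drive $\mathcal{L}_i$ almost to zero, so this sum can be essentially $\mathcal{L}_i(\mathbf{W}^{(r)})$ alone. Your fallback of keeping only $t=0$ therefore proves \eqref{eq:risk_bd} with $T_{\min}$ replaced by $1$, a factor $T_{\min}$ weaker than claimed. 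The telescoped inequality you write with $\bar\eta T_{\min}$ on the left is not justified.

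\textbf{The drift term does not cancel.} You bound the linear drift correction $\eta_l\sum_{s<t}\langle\nabla\mathcal{L}_i(\mathbf{D}_{i,t}^{(r)}),\nabla\mathcal{L}_i(\mathbf{D}_{i,s}^{(r)})\rangle$ by Cauchy--Schwarz. This produces a term of order $\bar\eta\,\eta_l T_{\max}\sum_{i,t}\frac{n_i}{n}\mathcal{G}_i(\mathbf{D}_{i,t}^{(r)})$, where $\mathcal{G}_i(\mathbf{D})=\frac{1}{n_i}\sum_j(1-\widehat{y}_{i,j|\mathbf{D}})$. The $\eta_l^2T_{\max}^2/\gamma$ component of your comparator contributes only $-2\bar\eta\,\eta_l^2T_{\max}^2\sum_{i,t}\frac{n_i}{n}\mathcal{G}_i(\mathbf{D}_{i,t}^{(r)})$. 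The first is first order in the drift and the second is second order, so the comparator piece dominates only when $\eta_lT_{\max}\gtrsim 1$. For small local stepsizes, the claim that the drift is ``handled by the $\eta_l^2T_{\max}^2$ piece'' is false.

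\textbf{The missing idea.} You do not need monotonicity of local GD. You only need the $1/2$-smoothness of $\mathcal{L}_i$, used as a comparison inequality that holds for any stepsize. Do not split off the drift. Keep $\mathcal{L}_i(\mathbf{D})+\langle\nabla\mathcal{L}_i(\mathbf{D}),\mathbf{W}^{(r)}-\mathbf{D}\rangle$ together and bound it below by the descent lemma. Combined with convexity toward $\mathbf{U}_1$, this gives, for every local index,
\[
\langle\nabla\mathcal{L}_i(\mathbf{D}_{i,\tau-1}^{(r)}),\mathbf{W}^{(r)}-\mathbf{U}_1\rangle\ \ge\ \mathcal{L}_i(\mathbf{W}^{(r)})-\mathcal{L}_i(\mathbf{U}_1)-\tfrac14\lVert\mathbf{W}^{(r)}-\mathbf{D}_{i,\tau-1}^{(r)}\rVert_F^2 .
\]
Two things follow.
\begin{itemize}
\item Every local step now contributes the broadcast-point loss, so you get $\sum_i\frac{n_i}{n}T_i\mathcal{L}_i(\mathbf{W}^{(r)})\ge T_{\min}\mathcal{L}(\mathbf{W}^{(r)})$.
\item The drift enters only quadratically. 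By Cauchy--Schwarz along the local trajectory and Lemma~\ref{lem:linsep}-(iii), $\frac{\bar\eta}{2}\sum_{i,\tau}\frac{n_i}{n}\lVert\mathbf{W}^{(r)}-\mathbf{D}_{i,\tau-1}^{(r)}\rVert_F^2\lesssim\bar\eta\,\eta_l^2T_{\max}^2\sum_{i,t}\frac{n_i}{n}\mathcal{G}_i(\mathbf{D}_{i,t-1}^{(r)})$. This is exactly the scale cancelled by the $\eta_l^2T_{\max}^2/\gamma$ component of $\mathbf{U}$ (Lemma~\ref{lem:cal_e_r}).
\end{itemize}
This is precisely the paper's route, and it also explains why $\eta_l^4T_{\max}^4$ appears in the bound.

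\textbf{A smaller point.} Keep the cancellations at the per-sample level $\sum_j(1-\widehat{y}_{i,j})$, rather than writing $g(\mathcal{L}_i)$ or $\min\{1,\mathcal{L}_i\}$. Since $g$ is concave, $\frac{1}{n_i}\sum_j g(\ell_{i,j})\le g(\mathcal{L}_i)$. That is the wrong direction for using the $\mathbf{W}_\star$ terms as negative contributions.
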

Unlike the classical analysis of {\tt FedAvg} \cite{karimireddy2020scaffold,li2019convergence, wang2021novel}, the bounds \eqref{eq:risk_bd}, \eqref{eq:param_bound} hold \emph{for any} $\eta_g , \eta_l > 0$ and any distribution of $\{T_i\}_{i=1}^m$. It demonstrates that the average risk converges to zero at ${\cal O}(1/R)$ with heterogeneous devices. From \eqref{eq:risk_bd}, \eqref{eq:param_bound}, we notice that the effect of device heterogeneity is captured in $\eta_l^4 \frac{ T_{\max}^4 }{T_{\min}}$, $ \bar{\eta}^2 \frac{ T_{\rm avg}^2 }{T_{\min}}$. Here, \eqref{eq:risk_bd} shows that larger stepsizes $\eta_l, \bar{\eta}$ can amplify the effect of device heterogeneity. Interestingly, our result also indicates that choosing a smaller local stepsize $\eta_l$ may improve the performance. 

In comparison, the recent work \cite[Theorem 4.1]{crawshaw2025constant} showed a similar bound for {\tt FedAvg} as \eqref{eq:risk_bd} under the case of homogeneous devices with $T_i = T$ and binary classification. Similar to \cite{wu2024large}, they showed that
\[
\frac{1}{R}\sum_{r=0}^{R-1}\mathcal{L}(\mathbf{W}^{(r)}) = {\cal O}\left( \frac{ \log^2(T (1+ \bar{\eta} \gamma^2 R) ) + \bar{\eta}^2 T^2 }{ \bar{\eta} \, \gamma^4 R } \right).
\]
Considering the special case of \eqref{eq:risk_bd} with $T_i = T$ and $K=2$, we note that when $R \gg 1$, the above bound is worse than \eqref{eq:risk_bd} by a factor of $1/\gamma^2$. As $\gamma < 1$, this suggests that Theorem \ref{thm:eos_average_loss_bound} offers a tighter bound than \cite[Theorem 4.1]{crawshaw2025constant} even for the homogeneous devices case.
We remark that the improvement of \eqref{eq:risk_bd} is potentially due to the new proof structure which we shall develop in Section~\ref{sec:proof_idea}.

We next show that under an additional condition on the degree of device heterogeneity, {\tt FedAvg} exhibits a \emph{stable phase} convergence behavior:
\begin{theorem}
    \label{thm:stablep}
    Under Assumptions \ref{ass:bddf_dist} and \ref{ass:linsep_dist}, for every $\eta_g, \eta_l>0$, suppose that there exists $s \geq 0$ such that
    $\mathcal{L}(\mathbf{W}^{(s)})\leq \min\{ \frac{1}{2n}, \frac{1}{4\bar{\eta}T_\text{avg}}, \frac{\gamma T_{\rm avg}}{40n\eta_l  T_{\rm max}^2 \sqrt{K}} \}$,
    and the heterogeneity among clients satisfies $\max_i |T_i/T_{\rm avg}-1| \leq \gamma/(20\sqrt{K})$,
    then 
    \[
    \mathcal{L}(\mathbf{W}^{(r+1)}) \leq \mathcal{L}(\mathbf{W}^{(r)}),~\forall~r \geq s. \numberthis \label{eq:monotone_result}
    \] 
\end{theorem}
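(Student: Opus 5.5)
The plan is an induction on $r \geq s$. The invariant is that $\mathcal{L}(\mathbf{W}^{(r)})$ satisfies the three smallness conditions of the hypothesis. These are preserved automatically once monotonicity holds at step $r$, so the whole task is a one-step descent lemma valid whenever $\mathbf{W}=\mathbf{W}^{(r)}$ obeys those conditions.

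We will use two standard self-bounding properties of the multinomial cross-entropy under Assumption~\ref{ass:bddf_dist}. First, $\lVert\nabla \ell_{i,j}(\mathbf{W})\rVert_F \leq \sqrt{2}\,\ell_{i,j}(\mathbf{W})$ up to a constant. Second, $\lVert \nabla^2 \ell_{i,j}(\mathbf{W})\rVert_{\rm op} \lesssim \ell_{i,j}(\mathbf{W})$, and this holds uniformly on a neighborhood of size $O(1)$. Since $\mathcal{L}(\mathbf{W})\leq 1/(2n)$, every sample has $\ell_{i,j}\leq 1/2$, so every sample is correctly classified with margin and these local bounds apply. Assumption~\ref{ass:linsep_dist} adds the lower bound $\langle -\nabla\mathcal{L}(\mathbf{W}), \mathbf{W}_\star\rangle \geq c\,\gamma\,\mathcal{L}(\mathbf{W})$, hence $\lVert\nabla\mathcal{L}(\mathbf{W})\rVert_F \geq c\gamma\mathcal{L}(\mathbf{W})$.

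\textbf{Step 1: control the local drift.} On device $i$, local GD on $\mathcal{L}_i$ starts from a point with $n_i\mathcal{L}_i \leq n\mathcal{L}\leq 1/2$. Using the smallness of $\eta_l T_{\max}\mathcal{L}$, which comes from the third condition, we show that each local iterate stays within an $O(\eta_l T_i \mathcal{L}_i)$-neighborhood of $\mathbf{W}$. Inside that neighborhood the losses change by at most a constant factor. Consequently
\[
\Delta\mathbf{D}_i = -\eta_l T_i \nabla\mathcal{L}_i(\mathbf{W}) + \mathbf{E}_i, \qquad \lVert \mathbf{E}_i\rVert_F \lesssim \eta_l^2 T_i^2\, \mathcal{L}_i(\mathbf{W})^2 ,
\]
where the error is the accumulated Hessian-times-displacement term.

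\textbf{Step 2: decompose the aggregated update.} Write the aggregated direction as
\[
\sum_i \tfrac{n_i}{n}\Delta\mathbf{D}_i = -\eta_l T_{\rm avg}\nabla\mathcal{L}(\mathbf{W}) - \eta_l\sum_i \tfrac{n_i}{n}(T_i - T_{\rm avg})\nabla\mathcal{L}_i(\mathbf{W}) + \sum_i\tfrac{n_i}{n}\mathbf{E}_i .
\]
The heterogeneity term is bounded in norm by $\eta_l T_{\rm avg}\max_i|T_i/T_{\rm avg}-1|\cdot \mathcal{L}(\mathbf{W})$, using the self-bounding gradient and $\sum_i \frac{n_i}{n}\mathcal{L}_i=\mathcal{L}$. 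By the assumption $\max_i|T_i/T_{\rm avg}-1|\leq\gamma/(20\sqrt{K})$, this term is a small fraction of the main term, whose component along $\mathbf{W}_\star$ is at least $c\gamma\eta_l T_{\rm avg}\mathcal{L}$. The $\sqrt K$ absorbs the constant relating the Frobenius norm of the gradient to $\mathcal{L}$ in the $K$-class case. The drift error is at most $\eta_l^2T_{\max}^2 n\mathcal{L}\cdot\mathcal{L}$, where we use $\mathcal{L}_i\leq (n/n_i)\mathcal{L}$ so that $\sum_i \frac{n_i}{n}\mathcal{L}_i^2\leq n\mathcal{L}^2$. The condition $\mathcal{L}\leq \gamma T_{\rm avg}/(40 n\eta_l T_{\max}^2\sqrt K)$ makes this at most $\gamma\eta_l T_{\rm avg}\mathcal{L}/(40\sqrt K)$, which is again a small fraction.

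\textbf{Step 3: descent inequality.} Set $\mathbf{W}^+ = \mathbf{W} + \eta_g(\cdot)$. We expand $\mathcal{L}(\mathbf{W}^+) \leq \mathcal{L}(\mathbf{W}) + \langle\nabla\mathcal{L}(\mathbf{W}),\mathbf{W}^+-\mathbf{W}\rangle + C\mathcal{L}(\mathbf{W})\lVert\mathbf{W}^+-\mathbf{W}\rVert_F^2$, using the local Hessian bound on the segment; this needs $\lVert\mathbf{W}^+-\mathbf{W}\rVert_F=O(1)$, which follows from $\bar\eta T_{\rm avg}\mathcal{L}\leq 1/4$. The inner product is at most $-\bar\eta T_{\rm avg}\lVert\nabla\mathcal{L}\rVert_F^2$ plus the cross terms from Step 2. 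The quadratic term is at most $C\mathcal{L}\,\bar\eta^2 T_{\rm avg}^2\lVert\nabla \mathcal{L}\rVert_F^2(1+o(1))$, and $\bar\eta T_{\rm avg}\mathcal{L}\leq 1/4$ makes it at most half the linear decrease.

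\textbf{Main obstacle.} The difficulty is the cross terms. They are bounded in terms of $\mathcal{L}$, while the main decrease is $\lVert\nabla\mathcal{L}\rVert_F^2$. To compare the two, we will lower bound $\lVert\nabla\mathcal{L}\rVert_F\geq c\gamma\mathcal{L}$ via $\mathbf{W}_\star$ and multiply: each cross term is at most $\bar\eta\lVert\nabla\mathcal{L}\rVert_F\cdot \frac{\gamma T_{\rm avg}\mathcal{L}}{20\sqrt K}\leq \frac{1}{10}\bar\eta T_{\rm avg}\lVert\nabla\mathcal{L}\rVert_F^2$. Getting these constants to close is where the specific numbers $20$ and $40$ come from. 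I expect the hardest part to be Step 1: keeping the local iterates in the region where the self-bounded Hessian estimate holds for arbitrary $\eta_l$. There I would argue inductively over $t$ that $\mathcal{L}_i(\mathbf{D}_{i,t})\leq 2\mathcal{L}_i(\mathbf{W})$, using the condition $\eta_lT_{\max}^2 n\mathcal{L}\lesssim T_{\rm avg}$.
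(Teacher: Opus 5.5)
Your proposal is correct and follows essentially the paper's route. It uses the same induction on $r$ with the smallness conditions as invariant, the same split of the server step into $-\bar\eta T_{\rm avg}(\nabla\mathcal{L}+\mathbf{E}^{(r)}+\tilde{\mathbf{E}}^{(r)})$ with drift and heterogeneity each bounded by a small multiple of $\gamma\mathcal{L}$ (via the third smallness condition and the heterogeneity condition respectively), and the same conversion of both into fractions of $\|\nabla\mathcal{L}\|_F$ via $\|\nabla\mathcal{L}\|_F\ge(\gamma/2)\mathcal{L}$; the differences are only technical: the paper controls local iterates by local monotonicity ($\mathcal{L}_i(\mathbf{D}^{(r)}_{i,t})\le\mathcal{L}_i(\mathbf{W}^{(r)})$ once $\mathcal{L}_i\le 1/\eta_l$) rather than your factor-$2$ neighborhood induction, and it runs the descent step on $\log\mathcal{L}$, which is globally $2$-smooth, so it needs no bound on $\|\mathbf{W}^{(r+1)}-\mathbf{W}^{(r)}\|_F$ to control the Hessian along the segment.
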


\begin{remark}
    The condition $\max_i|T_i/T_\mathrm{avg}-1|\leq \gamma/(20K)$ is sufficient but conservative. It bounds the heterogeneity error $\widetilde{\mathbf{E}}^{(r)}$, a zero-sum combination of local gradients depending only on their deviations from the average. The proof bounds these deviations by $\|\nabla \mathcal{L}_i \|+\|\nabla \mathcal{L}\|$, which is tight only when the local gradients are maximally misaligned.
\end{remark}

The above observation shows that once ${\cal L}( {\bf W}^{(r)})$ is smaller than a threshold, the {\tt FedAvg} algorithm enters a stable phase where the objective value decreases monotonically. Notice that \eqref{eq:risk_bd} guarantees that as $R \gg 1$, there exists $s \in [R]$ such that ${\cal L}( {\bf W}^{(s)} ) = {\cal O}(1/R)$ and thus the existence of such iteration index $s$.
Theorem \ref{thm:stablep} further leads to an asymptotic convergence rate for {\tt FedAvg}: 
\begin{corollary} \label{thm:stabled}
    Under the same assumptions as Theorem \ref{thm:stablep}, 
    \[
    \mathcal{L}(\mathbf{W}^{(r)})\leq \frac{16}{\bar{\eta} T_{\rm avg} \gamma^2(r-s)},~\forall~r\geq s. \numberthis \label{eq:last_iter_risk}
    \]
\end{corollary}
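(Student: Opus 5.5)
Our plan is to reuse the one-round analysis behind Theorem~\ref{thm:stablep} and strengthen its monotone decrease into a quantitative descent inequality of the form
\[
\mathcal{L}(\mathbf{W}^{(r+1)}) \leq \mathcal{L}(\mathbf{W}^{(r)}) - c\,\bar{\eta} T_{\rm avg} \lVert \nabla \mathcal{L}(\mathbf{W}^{(r)})\rVert_F^2, \qquad r \geq s,
\]
with an absolute constant $c$ (we aim for $c = 1/4$). The first step is to write one {\tt FedAvg} round as $\mathbf{W}^{(r+1)} = \mathbf{W}^{(r)} - \bar{\eta} T_{\rm avg} \nabla \mathcal{L}(\mathbf{W}^{(r)}) + \bar{\eta}(\mathbf{E}^{(r)} + \widetilde{\mathbf{E}}^{(r)})$. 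Here $\mathbf{E}^{(r)}$ is the local drift, i.e.\ the difference between the gradients along the local trajectory and the gradient at the broadcast point. The term $\widetilde{\mathbf{E}}^{(r)} = \sum_i \frac{n_i}{n}(T_i - T_{\rm avg}) \nabla \mathcal{L}_i(\mathbf{W}^{(r)})$ is the zero-sum heterogeneity error. The proof of Theorem~\ref{thm:stablep} already controls both error terms below the threshold conditions. The drift is bounded using $\lVert \nabla\mathcal{L}_i\rVert \le \mathcal{L}_i \le n\mathcal{L}/n_i$ and local self-bounded smoothness, which gives $\lVert\nabla^2 \mathcal{L}_i\rVert_{\rm op}\lesssim \mathcal{L}_i$ along the local path. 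The heterogeneity term is bounded by $\frac{\gamma}{20\sqrt K} T_{\rm avg}(\lVert\nabla\mathcal{L}_i\rVert + \lVert\nabla\mathcal{L}\rVert)$. Both errors are thus a small fraction, about $\gamma/\sqrt{K}$ times a constant, of $T_{\rm avg}\mathcal{L}$.

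Second, we convert the gradient norm into the loss. Linear separability (Assumption~\ref{ass:linsep_dist}) gives $\langle -\nabla \mathcal{L}(\mathbf{W}), \mathbf{W}_\star\rangle \ge \gamma\, \frac{1}{n}\sum_{i,j} (1-\widehat{y}_{i,j|\mathbf{W}})$. In the regime $\mathcal{L}\le 1/(2n)$, every per-sample loss is at most $1/2$, so $1-\widehat{y} \ge \ell/2$, using $1-e^{-\ell}\ge \ell/2$ for $\ell \le 1$. Hence $\lVert\nabla\mathcal{L}\rVert_F \ge \gamma\mathcal{L}/2$. The same inequality shows that the error terms, which are of order $\gamma T_{\rm avg}\mathcal{L}/\sqrt{K}$, are dominated by $T_{\rm avg}\lVert\nabla \mathcal{L}\rVert_F$. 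The second-order term in the one-round Taylor expansion is $\bar{\eta}^2 T_{\rm avg}^2 \mathcal{L}\lVert\nabla\mathcal{L}\rVert^2$, and it is absorbed using $\mathcal{L}\le 1/(4\bar\eta T_{\rm avg})$. Combining these, we obtain the descent inequality
\[
\mathcal{L}_{r+1} \le \mathcal{L}_r - \tfrac{1}{16}\,\bar\eta T_{\rm avg}\gamma^2 \mathcal{L}_r^2 .
\]
Monotonicity from Theorem~\ref{thm:stablep} ensures that the threshold conditions persist for all $r\ge s$, so this inequality holds at every such step.

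Finally, we solve the recursion. Dividing by $\mathcal{L}_r\mathcal{L}_{r+1}$ and using $\mathcal{L}_{r+1}\le\mathcal{L}_r$ gives
\[
\frac{1}{\mathcal{L}_{r+1}} \ge \frac{1}{\mathcal{L}_r} + \frac{\bar\eta T_{\rm avg}\gamma^2}{16}.
\]
Telescoping from $s$ yields $1/\mathcal{L}_r \ge \bar\eta T_{\rm avg}\gamma^2 (r-s)/16$, which is \eqref{eq:last_iter_risk}; the case $r=s$ is trivial. The main obstacle is the first step: we must extract an explicit constant fraction of the ideal decrease $\bar\eta T_{\rm avg}\lVert\nabla\mathcal{L}\rVert^2$, rather than mere nonnegativity, from the estimates of Theorem~\ref{thm:stablep}. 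If those estimates only give a decrease of, say, one half of the ideal, we will track that factor together with $\lVert\nabla\mathcal{L}\rVert\ge\gamma\mathcal{L}/2$ so that the product stays at least $1/16$.
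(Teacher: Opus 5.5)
Your proposal is correct and follows the paper's proof essentially step for step: a Taylor-type descent bound $\mathcal{L}(\mathbf{W}^{(r+1)})\le\mathcal{L}(\mathbf{W}^{(r)})-\tfrac{\bar\eta T_{\rm avg}}{4}\|\nabla\mathcal{L}(\mathbf{W}^{(r)})\|_F^2$ from the $\tfrac{1}{10}\|\nabla\mathcal{L}\|_F$ error bounds and $\mathcal{L}\le 1/(4\bar\eta T_{\rm avg})$, then $\|\nabla\mathcal{L}\|_F\ge\tfrac{\gamma}{2}\mathcal{L}$, then telescoping $1/\mathcal{L}$; your derivation of the gradient lower bound via $1-e^{-\ell}\ge\ell/2$ is an equivalent variant of the paper's $\mathcal{G}\le 1/(2n)$ argument. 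Two small corrections. First, in the multiclass parameterization the self-bound is $\|\nabla\mathcal{L}_i\|_F\le\sqrt{K}\,\mathcal{L}_i$, not $\le\mathcal{L}_i$; the $\sqrt{K}$ in the thresholds cancels it, and bounding $\sum_i\frac{n_i}{n}|T_i-T_{\rm avg}|\,\|\nabla\mathcal{L}_i\|_F$ directly, rather than the centered deviations, keeps both errors at most $\tfrac{\gamma}{20}T_{\rm avg}\mathcal{L}\le\tfrac{1}{10}T_{\rm avg}\|\nabla\mathcal{L}\|_F$. Second, controlling the second-order term by $\mathcal{L}(\mathbf{W}^{(r)})$ needs the Hessian bound along the whole segment $[\mathbf{W}^{(r)},\mathbf{W}^{(r+1)}]$, which comes from convexity plus the monotonicity of Theorem~\ref{thm:stablep}.
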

The above result strengthens the average risk bound in \eqref{eq:risk_bd} and shows that the last iterate ${\cal L}( {\bf W}^{(r)} )$ converges as ${\cal O}(1/ (r T_{\rm avg}))$ when $r \gg 1$. Importantly, the effects of device heterogeneity vanish asymptotically and the convergence rate is dominated by the average local update steps $T_{\rm avg}$. Such estimation on the convergence rate is consistent with \cite{wang2021novel} which showed that {\tt FedAvg} with small stepsizes converges at a similar rate dependent on $T_{\rm avg}$. Unlike \cite{wang2021novel}, {\tt FedAvg} in our setting converges to an unbiased solution as $\{ {\cal L}_i(\cdot) \}_{i=1}^m$ admit a common optimal solution under Assumption \ref{ass:linsep_dist}.
We remark that the above results assume the distribution of $\{T_i\}_{i=1}^m$ to satisfy $\max_i |T_i/T_{\rm avg}-1| \leq \gamma/(20\sqrt{K})$, e.g., when $T_{\rm avg}$ is large or the variation of $\{T_i\}_{i=1}^m$ is small. As shown in Section~\ref{sec:exp}, such stable phase convergence behavior is observed in practice even without the above condition. 

We present proof outlines for the theorems above. The missing proofs are in \url{https://www1.se.cuhk.edu.hk/~htwai/pdf/cdc26-largestep.pdf}.

\subsection{Proof of Theorem \ref{thm:eos_average_loss_bound}} \label{sec:proof_idea}
Let ${\bf W}_\star$ be a matrix satisfying Assumption \ref{ass:linsep_dist}. To facilitate our proof, for any $q \in [K]$, let ${\bf W} := [ {\bf w}_1, \ldots, {\bf w}_K ]$,
\[ \textstyle
\widehat{y}_{i,j | {\bf W}}^{(q)} := \frac{ \exp( {\bf w}_{q}^\top {\bm x}_{i,j} ) }{ \sum_{k=1}^K \exp( {\bf w}_k^\top {\bm x}_{i,j} ) } .
\]
Notice that $\sum_{k=1}^K \widehat{y}_{i,j | {\bf W}}^{(k)} = 1$.
We first observe the following consequence of Assumption \ref{ass:linsep_dist}, summarized as the lemma:
\begin{lemma} \label{lem:linsep} Under Assumption \ref{ass:linsep_dist}.
We observe the following bounds for any $i,j$,
\begin{enumerate}[label=(\roman*)]
    \item with any $\alpha > 0$, 
    \begin{equation}
        \ell_{i,j} ( \alpha {\bf W}_\star ) \leq (K-1)\exp\left( -\alpha \gamma \right). \label{eq:lij_bound}
    \end{equation}
    \item with any ${\bf W}$,
    \begin{equation}
        \langle \nabla \ell_{i,j} ( {\bf W} ), {\bf W}_\star \rangle \leq - \gamma ( 1-\widehat{y}_{i,j | {\bf W}} ). \label{eq:lij_gradient_match}
    \end{equation}
    \item with any ${\bf W}$, 
    \begin{equation}
        \| \nabla \ell_{i,j} ( {\bf W} ) \|_F^2 \leq 2 ( 1- \widehat{y}_{i,j | {\bf W}} ). \label{eq:lij_gradient_bound}
    \end{equation}
\end{enumerate}
\end{lemma}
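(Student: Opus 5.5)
The plan is a direct computation with the softmax gradient. Write $y = y_{i,j}$, ${\bm x} = {\bm x}_{i,j}$, $p_k = \widehat{y}^{(k)}_{i,j|{\bf W}}$, so that $p_y = \widehat{y}_{i,j|{\bf W}}$ and $\sum_k p_k = 1$. Since $\ell_{i,j}({\bf W}) = -{\bf w}_y^\top {\bm x} + \log \sum_k \exp({\bf w}_k^\top {\bm x})$, the gradient with respect to the row ${\bf w}_k$ is $(p_k - \mathds{1}\{k=y\}){\bm x}$. In matrix form,
\[
\nabla \ell_{i,j}({\bf W}) = ({\bf p} - {\bf e}_y){\bm x}^\top ,
\]
where ${\bf p}=(p_1,\ldots,p_K)^\top$ and ${\bf e}_y$ is the standard basis vector. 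All three parts follow from this formula.

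For (i), write
\[
\ell_{i,j}(\alpha {\bf W}_\star) = \log\Big(1+\sum_{k\neq y}\exp\big(-\alpha(\mathbf{w}_{\star,y}-\mathbf{w}_{\star,k})^\top{\bm x}\big)\Big).
\]
By Assumption~\ref{ass:linsep_dist}, each of the $K-1$ exponents is at most $-\alpha\gamma$. The bound then follows from $\log(1+u)\le u$.

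For (ii), compute
\[
\langle \nabla\ell_{i,j}({\bf W}),{\bf W}_\star\rangle = \sum_k (p_k-\mathds{1}\{k=y\})\,\mathbf{w}_{\star,k}^\top{\bm x}.
\]
Using $1-p_y=\sum_{k\ne y}p_k$, this rewrites as $-\sum_{k\neq y}p_k(\mathbf{w}_{\star,y}-\mathbf{w}_{\star,k})^\top{\bm x}$. Linear separability bounds each term below by $\gamma p_k$, which gives the upper bound $-\gamma\sum_{k\ne y}p_k=-\gamma(1-p_y)$.

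For (iii), the gradient is a rank-one matrix, so
\[
\|\nabla\ell_{i,j}\|_F^2=\|{\bf p}-{\bf e}_y\|^2\|{\bm x}\|^2\le \|{\bf p}-{\bf e}_y\|^2
\]
by Assumption~\ref{ass:bddf_dist}. Expanding, $\|{\bf p}-{\bf e}_y\|^2=(1-p_y)^2+\sum_{k\ne y}p_k^2$. The first term is at most $1-p_y$ because $0\le 1-p_y\le 1$. For the second, $\sum_{k\ne y}p_k^2\le(\sum_{k\ne y}p_k)^2=(1-p_y)^2\le 1-p_y$. Adding the two gives the factor $2$.

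No step is a real obstacle. The only point needing care is the rewriting in (ii): one must use $\sum_k p_k=1$ so that the inner product is expressed through the margin differences $(\mathbf{w}_{\star,y}-\mathbf{w}_{\star,k})^\top{\bm x}$, to which Assumption~\ref{ass:linsep_dist} applies.
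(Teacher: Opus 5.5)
Your proof is correct and follows essentially the same route as the paper's: the log-sum-exp rewriting plus $\log(1+u)\le u$ for (i), the identity $\sum_k p_k=1$ to express the inner product through the margins $(\mathbf{w}_{\star,y}-\mathbf{w}_{\star,k})^\top{\bm x}$ for (ii), and the expansion $(1-p_y)^2+\sum_{k\neq y}p_k^2$ for (iii). The only difference is immaterial: the paper bounds the second term by $p_k^2\le p_k$ where you use $\sum_{k\ne y}p_k^2\le(1-p_y)^2$. You are also right to invoke Assumption~\ref{ass:bddf_dist} in (iii); the lemma's stated hypothesis omits it, but the paper's proof also uses $\|{\bm x}_{i,j}\|\le 1$ there.
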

\noindent The above results are consequences of Assumption \ref{ass:linsep_dist} and properties of the cross-entropy loss.

Our key step is to control the distance between ${\bf W}^{(r)}$ and ${\bf U} := {\bf U}_1 + {\bf U}_2 + {\bf U}_3$, defined as 
\begin{align*}
    \mathbf{U}_1=\frac{\log((K-1)\gamma^2\bar{\eta} R T_{\rm avg} )}{\gamma}\mathbf{W}_\star, \\\mathbf{U}_2=\frac{\bar{\eta}T_{\rm avg}}{\gamma}\mathbf{W}_\star, \mathbf{U}_3=\frac{\eta_l^2 T_{\max}^2}{\gamma}\mathbf{W}_\star.\numberthis
\end{align*}
In particular, we notice that the server update in round $r$, $\Delta {\bf W}^{(r)} = {\bf W}^{(r+1)} - {\bf W}^{(r)}$, can be written as
\begin{equation}
\bm{\Delta} \mathbf{W}^{(r)} = 
-\frac{\bar{\eta}}{n}\sum_{i=1}^m\sum_{\tau=1}^{T_i} n_i\cdot\nabla\mathcal{L}_i(\mathbf{D}_{i,\tau-1}^{(r)}).
\end{equation} 
As such, 
\begin{align*}
& \| {\bf W}^{(r+1)} - {\bf U} \|_F^2 - \| {\bf W}^{(r)} - {\bf U} \|_F^2 = \| \bm{\Delta} \mathbf{W}^{(r)} \|_F^2 \numberthis \\
& + \underbrace{ 2\langle \bm{\Delta} \mathbf{W}^{(r)} , {\bf W}^{(r)} - {\bf U}_1 \rangle}_{=: {\cal A}_1} + \underbrace{2\langle -\bm{\Delta} \mathbf{W}^{(r)} , {\bf U}_2 + {\bf U}_3 \rangle}_{=: {\cal A}_2} .
\end{align*}
To bound ${\cal A}_1$, we have that as ${\cal L}_i(\cdot)$ is $1/2$-smooth (cf.~Lemma \ref{lmm:smoothloc}) and convex,
\begin{align*}
& \langle \nabla \mathcal{L}_i(\mathbf{D}^{(r)}_{i,\tau-1}), \mathbf{W}^{(r)} - \mathbf{U}_1 \rangle \\
& \geq {\cal L}_i( {\bf W}^{(r)} ) - {\cal L}_i( {\bf U}_1 ) - \frac{1}{4} \| {\bf W}^{(r)} - {\bf D}_{i,\tau-1}^{(r)} \|_F^2.\numberthis
\end{align*}
Therefore,
\begin{align*}
{\cal A}_1 & = -\frac{2\bar{\eta}}{n}\sum_{i=1}^m\sum_{\tau=1}^{T_i} n_i  \langle \nabla \mathcal{L}_i(\mathbf{D}^{(r)}_{i,\tau-1}),\mathbf{W}^{(r)}-\mathbf{U}_1 \rangle \\
& \leq \frac{2\bar{\eta}}{n}\sum_{i=1}^m n_i T_i \left( \mathcal{L}_i(\mathbf{U}_1) - \mathcal{L}_i(\mathbf{W}^{(r)})  \right) + {\cal E}^{(r)} 
\end{align*} 
where 
\begin{align}
    {\cal E}^{(r)} := \frac{\bar{\eta}}{2n}\sum_{i=1}^m \sum_{\tau=1}^{T_i} n_i \|\mathbf{W}^{(r)}-\mathbf{D}^{(r)}_{i,\tau-1}\|^2_F. \numberthis \label{eq:cal_e_r}
\end{align}
Applying Lemma \ref{lem:linsep}-(i) with $\alpha = \log((K-1)\gamma^2\bar{\eta} R T_{\rm avg} )/\gamma$, it holds that
${\cal L}_i( {\bf U}_1 ) \leq 1 / ( \gamma^2 \bar{\eta} RT_{\rm avg})$ for all $i$. Therefore,
\begin{align*}
\frac{2\bar{\eta}}{n}\sum_{i=1}^m n_i T_i {\cal L}_i( {\bf U}_1 ) \leq \frac{2}{ \gamma^2 R }.\numberthis
\end{align*}  
On the other hand, with ${\cal L}_i( {\bf W}^{(r)} ) \geq 0$, we have
\begin{align*}
    \frac{2\bar{\eta}}{n}\sum_{i=1}^m n_i T_i \mathcal{L}_i(\mathbf{W}^{(r)}) \geq 2 \bar{\eta} T_{\min} {\cal L}( {\bf W}^{(r)} ). \numberthis
\end{align*}
This yields ${\cal A}_1 \leq \frac{2}{\gamma^2 R} - 2 \bar{\eta} T_{\min} {\cal L}( {\bf W}^{(r)} ) + {\cal E}^{(r)}$.

To bound ${\cal A}_2$, we can apply Lemma \ref{lem:linsep}-(ii) and observe that ${\cal A}_2$ can be evaluated and bounded as
\begin{align*}
& \frac{2\bar{\eta}}{n}\left(\frac{\bar{\eta}T_{\rm avg}}{\gamma}+\frac{\eta_l^2T_{\max}^2}{\gamma}\right)\sum_{i=1}^m\sum_{\tau=1}^{T_i}n_i \langle \nabla \mathcal{L}_i(\mathbf{D}^{(r)}_{i,\tau-1}),\mathbf{W}_\star\rangle\\
&\leq \frac{2\bar{\eta}}{n}\left( {\bar{\eta}T_{\rm avg}} + {\eta_l^2T_{\max}^2}\right) \sum_{i=1}^m\sum_{\tau=1}^{T_i} \sum_{j=1}^{n_i} \left( \widehat{y}_{i,j | {\bf D}_{i,\tau-1}^{(r)}} - 1 \right). \numberthis \label{eqn:orp2}
\end{align*}
Now, using Lemma \ref{lem:linsep}-(iii), $\| \bm{\Delta} \mathbf{W}^{(r)} \|_F^2$ can be bounded as
\begin{align*}
& \| \bm{\Delta} \mathbf{W}^{(r)} \|_F^2 \hspace{-.1cm} \leq \hspace{-.1cm} \left( \frac{\bar{\eta}^2}{n}\sum_{i=1}^m\sum_{\tau=1}^{T_i} n_{i} \right) \sum_{i=1}^m\sum_{\tau=1}^{T_i} \frac{n_i}{n} \|\nabla\mathcal{L}_i(\mathbf{D}^{(r)}_{i,\tau-1})\|^2_F\\
&\leq \frac{2\bar{\eta}^2T_{\rm avg}}{n} \sum_{i=1}^m\sum_{\tau=1}^{T_i}\sum_{j=1}^{n_i} \left(1-\widehat{y}_{i,j | \mathbf{D}^{(r)}_{i,\tau-1}} \right).\numberthis \label{eqn:orp3}
\end{align*}
Using Lemma \ref{lem:linsep}, the residual term ${\cal E}^{(r)}$ in \eqref{eq:cal_e_r} satisfies: 
\begin{lemma} \label{lem:cal_e_r}
    Under Assumption \ref{ass:linsep_dist}, for any $r \geq 0$, it holds
    \begin{equation} \label{eq:bound_cal_e_r}
    {\cal E}^{(r)} \leq \frac{2\bar{\eta}\eta_l^2 T_{\max}^2}{n}\sum_{i=1}^m  \sum_{t=1}^{T_i} \sum_{j=1}^{n_i} \left(1-\widehat{y}_{i,j | {\bf D}^{(r)}_{i,t-1}} \right).
    \end{equation}
\end{lemma}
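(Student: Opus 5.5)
The plan is a direct computation: unroll the local recursion of Algorithm \ref{alg:fedavg}, apply Cauchy--Schwarz over the local steps, and convert squared local gradients into $(1-\widehat{y})$ terms with Lemma \ref{lem:linsep}-(iii). No smoothness or convexity is needed, only Assumption \ref{ass:linsep_dist} through Lemma \ref{lem:linsep}. It is convenient to write $a_{i,s} := \|\nabla \mathcal{L}_i(\mathbf{D}^{(r)}_{i,s})\|_F^2$.

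\textbf{Step 1 (unrolling).} Since $\mathbf{D}^{(r)}_{i,0} = \mathbf{W}^{(r)}$ (the broadcast model at the start of the round, in the indexing of \eqref{eq:cal_e_r}), the local updates give
\[
\mathbf{W}^{(r)} - \mathbf{D}^{(r)}_{i,\tau-1} = \eta_l \sum_{s=0}^{\tau-2} \nabla \mathcal{L}_i(\mathbf{D}^{(r)}_{i,s}).
\]
Cauchy--Schwarz over the $\tau-1$ summands then yields
\[
\|\mathbf{W}^{(r)} - \mathbf{D}^{(r)}_{i,\tau-1}\|_F^2 \leq \eta_l^2 (\tau-1) \sum_{s=0}^{\tau-2} a_{i,s}.
\]

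\textbf{Step 2 (summing over $\tau$).} Enlarge each inner sum to the full range $s = 0,\ldots,T_i-1$; this is valid because every $a_{i,s} \geq 0$. Summing over $\tau = 1,\ldots,T_i$ then gives
\[
\sum_{\tau=1}^{T_i} \|\mathbf{W}^{(r)} - \mathbf{D}^{(r)}_{i,\tau-1}\|_F^2 \leq \eta_l^2 \frac{T_i(T_i-1)}{2} \sum_{s=0}^{T_i-1} a_{i,s} \leq \frac{\eta_l^2 T_{\max}^2}{2} \sum_{s=0}^{T_i-1} a_{i,s}.
\]

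\textbf{Step 3 (gradient to $(1-\widehat{y})$).} Since $\nabla \mathcal{L}_i = \frac{1}{n_i}\sum_j \nabla \ell_{i,j}$, Jensen's inequality for $\|\cdot\|_F^2$ together with Lemma \ref{lem:linsep}-(iii) gives
\[
a_{i,s} \leq \frac{1}{n_i}\sum_{j=1}^{n_i} \|\nabla \ell_{i,j}(\mathbf{D}^{(r)}_{i,s})\|_F^2 \leq \frac{2}{n_i}\sum_{j=1}^{n_i}\left(1-\widehat{y}_{i,j|\mathbf{D}^{(r)}_{i,s}}\right).
\]

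\textbf{Step 4 (assembly).} Substitute Steps 2 and 3 into \eqref{eq:cal_e_r}. The factor $n_i$ in \eqref{eq:cal_e_r} cancels the $1/n_i$ from Step 3, and reindexing $t = s+1$ gives
\[
\mathcal{E}^{(r)} \leq \frac{\bar\eta}{2n}\sum_{i=1}^m n_i \cdot \frac{\eta_l^2 T_{\max}^2}{2}\cdot \frac{2}{n_i} \sum_{t=1}^{T_i}\sum_{j=1}^{n_i}\left(1-\widehat{y}_{i,j|\mathbf{D}^{(r)}_{i,t-1}}\right)
= \frac{\bar\eta\,\eta_l^2 T_{\max}^2}{2n} \sum_{i=1}^m \sum_{t=1}^{T_i}\sum_{j=1}^{n_i}\left(1-\widehat{y}_{i,j|\mathbf{D}^{(r)}_{i,t-1}}\right).
\]
This is smaller than \eqref{eq:bound_cal_e_r} by a factor of $4$, which leaves room for looser bookkeeping, for instance using $T_i^2$ in place of $T_i(T_i-1)/2$.

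The only delicate point is the double sum in Step 2: the partial sums up to $\tau-2$ must be dominated by the full sum over $0,\ldots,T_i-1$, and this relies on nonnegativity of the $a_{i,s}$. Everything else is routine.
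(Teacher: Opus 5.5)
Your argument is correct and in fact proves the bound with a constant four times smaller than stated. The overall skeleton matches the paper's: bound the drift by squared local gradients, then convert via Jensen and Lemma \ref{lem:linsep}-(iii). The drift step, however, is done differently.

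The paper works recursively. It applies Young's inequality $\|\mathbf{a}+\mathbf{b}\|_F^2\le(1+\frac{1}{T_i-1})\|\mathbf{a}\|_F^2+T_i\|\mathbf{b}\|_F^2$ at each local step and unrolls into a geometrically weighted sum of past squared gradients. It then swaps the order of summation and uses $\sum_{\tau=1}^{T_i-1}(1+\frac{1}{T_i-1})^{\tau}\le(e-1)T_i\le 2T_i$. This requires $T_i=1$ to be treated separately, and it yields a per-device factor $2T_i^2$ where you obtain $T_i(T_i-1)/2$; that gap accounts for your factor-$4$ slack.

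Your exact identity $\mathbf{W}^{(r)}-\mathbf{D}^{(r)}_{i,\tau-1}=\eta_l\sum_{s=0}^{\tau-2}\nabla\mathcal{L}_i(\mathbf{D}^{(r)}_{i,s})$ plus Cauchy--Schwarz is shorter and covers $T_i=1$ automatically. The Young recursion is the standard template from small-stepsize local-SGD analyses, where smoothness is fed back into the recursion. Here the target is already expressed through gradients at the local iterates, so the recursion buys nothing.

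Your bookkeeping has two side benefits. First, the paper's written chain replaces the prefactor $\frac{\bar\eta}{2n}$ of \eqref{eq:cal_e_r} by $\frac{\bar\eta}{n}$, so it actually ends with constant $4$ rather than the stated $2$; your version avoids that slip. Second, in the proof of Theorem \ref{thm:eos_average_loss_bound} this constant is cancelled exactly by the $\mathbf{U}_3$ part of $\mathcal{A}_2$, so your sharper bound would let $\mathbf{U}_3$ be shrunk by a factor of $4$.

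One small correction to your commentary: Lemma \ref{lem:linsep}-(iii) relies on $\|\mathbf{x}_{i,j}\|\le 1$ (Assumption \ref{ass:bddf_dist}), not on separability. So it is not true that only Assumption \ref{ass:linsep_dist} is used, although the paper's own statement has the same imprecision.
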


Combining \eqref{eq:bound_cal_e_r}, \eqref{eqn:orp2}, and \eqref{eqn:orp3}, we obtain ${\cal A}_2 + {\cal E}^{(r)} + \| \bm{\Delta} \mathbf{W}^{(r)} \|_F^2 \leq 0$. As such, 
\begin{align*}
& \| {\bf W}^{(r+1)} - {\bf U} \|_F^2 - \| {\bf W}^{(r)} - {\bf U} \|_F^2 \\
& \leq \frac{2}{\gamma^2 R} - 2 \bar{\eta} T_{\min} {\cal L}( {\bf W}^{(r)} ). \numberthis \label{eq:recur_final}
\end{align*}
Re-arranging terms leads to 
\begin{align*}
\frac{1}{R} \sum_{r=0}^{R-1} {\cal L}( {\bf W}^{(r)} ) \leq \frac{1}{2 \bar{\eta} T_{\min} R} \left( \| {\bf W}^{(0)} - {\bf U} \|_F^2 + \frac{2}{\gamma^2}\right). 
\end{align*}
Using that
\begin{align}
    \| {\bf U} \|_F^2 \leq \frac{\log^2(\gamma^2\bar{\eta} R(K-1)T_{\rm avg})+\bar{\eta}^2T_{\rm avg}^2+\eta_l^4 T_{\max}^4}{\gamma^2 / 2} ,\label{eq:bound_u}
\end{align}
we obtain \eqref{eq:risk_bd} as a result.
Iterating from \eqref{eq:recur_final} yields
\begin{equation}
    \lVert\mathbf{W}^{(R)}-\mathbf{U}\rVert_F^2 \leq \lVert\mathbf{W}^{(0)}-\mathbf{U}\rVert_F^2+ {2}/ {\gamma^2}.
\end{equation}
Taking square root on both sides and applying triangle inequality yields
\begin{align*}
    \lVert \mathbf{W}^{(R)}\rVert_F & \leq \lVert\mathbf{W}^{(0)}\rVert_F+\lVert\mathbf{U}\rVert_F+ {\sqrt{2}} / {\gamma}. \numberthis 
\end{align*}
Applying \eqref{eq:bound_u} yields \eqref{eq:param_bound} and conclude our proof.

\begin{remark}
Compared to the proof of \cite[Theorem 4.1]{crawshaw2025constant}, our proof relies on a different choice of the reference point ${\bf U}$, together with a recursion for the round-by-round progress of {\tt FedAvg}. These non-trivial modifications have enabled us to obtain a tighter bound than \cite[Theorem 4.1]{crawshaw2025constant} that also handles the heterogeneous devices case.
\end{remark}

\subsection{Proof of Theorem \ref{thm:stablep}}
We preface the proof by defining the gradient potentials:
\begin{align*}
    \mathcal{G}(\mathbf{W}) & := \sum_{i=1}^m \frac{n_i}{n} \mathcal{G}_i(\mathbf{W}),~\mathcal{G}_i(\mathbf{W}) := \sum_{j=1}^{n_i} \frac{1-\widehat{y}_{i,j | {\bf W}} }{n_i}.
\end{align*}
Note that ${\cal G}_i( {\bf W} ) \leq {\cal L}_i( {\bf W} )$ for all $i$. 
We shall prove \eqref{eq:monotone_result} through showing by induction that for any $r \geq s$, it holds
\begin{subequations} \label{eq:induction_claim}
\begin{align}
    & \mathcal{L}(\mathbf{W}^{(r)}) \leq \min \left\{ \frac{1}{2n}, \frac{1}{4 \bar{\eta} T_{\rm avg}},  \frac{\gamma T_{\rm avg}}{40n\eta_l  T_{\rm max}^2 \sqrt{K}} \right\}, \nonumber \\
    & \mathcal{G}(\mathbf{W}^{(r)}) \leq 1/(2n), \label{eq:induction_claim_a} \\
    & {\cal L} ({\bf W}^{(r+1)}) \leq {\cal L}( {\bf W}^{(r)}). \label{eq:induction_claim_b}
\end{align}
\end{subequations}

For the base case with $r=s$, we note that \eqref{eq:induction_claim_a} holds by the assumption of Theorem \ref{thm:stablep} and ${\cal G}({\bf W}^{(r)} ) \leq {\cal L}({\bf W}^{(r)})$. 
To establish \eqref{eq:induction_claim_b}, the key step is to analyze the one-step progress from ${\bf W}^{(r)}$ to ${\bf W}^{(r+1)}$ and to bound ${\cal L}( {\bf W}^{(r+1)} )$ via a descent lemma type argument on $\log {\cal L}( {\bf W}^{(r)} )$. 
To this end, the server-side update can be written as:
\[
\mathbf{W}^{(r+1)}-\mathbf{W}^{(r)}:= -\bar{\eta}T_{\rm avg}(\nabla\mathcal{L}(\mathbf{W}^{(r)}) + \mathbf{E}^{(r)} + \tilde{\mathbf{E}}^{(r)}),
\]
\[ \textstyle
\mathbf{E}^{(r)}= \sum_{i=1}^m \sum_{\tau=1}^{T_i} \frac{n_i}{nT_{\rm avg}} \left(\nabla\mathcal{L}_i(\mathbf{D}_{i,\tau-1}^{(r)})-\nabla\mathcal{L}_i (\mathbf{W}^{(r)})\right),  
\]
\[ \textstyle
\tilde{\mathbf{E}}^{(r)}= \sum_{i=1}^m \sum_{\tau=1}^{T_i} \frac{n_i}{nT_{\rm avg}} \left( \nabla\mathcal{L}_i (\mathbf{W}^{(r)}) - \nabla\mathcal{L} (\mathbf{W}^{(r)}) \right).
\]
In Appendix~\ref{app:stablep}, we show that under the assumptions of Theorem \ref{thm:stablep} and ${\cal G}({\bf W}^{(r)}) \leq 1/(2n)$, it holds that
\begin{equation}
    \max\{ \|{\mathbf{E}}^{(r)}\|_F, \| \tilde{\mathbf{E}}^{(r)} \|_F \} \leq \frac{1}{10} \|\nabla\mathcal{L}(\mathbf{W}^{(r)})\|_F.
\end{equation}
The above bound yields 
\begin{equation} \label{eq:er_er_bound}
\|\nabla\mathcal{L}(\mathbf{W}^{(r)})+\mathbf{E}^{(r)} + \tilde{\bf E}^{(r)}\|_F \geq 4 \|{\mathbf{E}}^{(r)} + \tilde{\mathbf{E}}^{(r)} \|_F.
\end{equation}
Finally, using the $2$-smoothness of $\log \mathcal{L}(\mathbf{W})$ (cf.~Lemma~\ref{lem:lipcon}),
\begin{align*}
& \log \mathcal{L}(\mathbf{W}^{(r+1)})-\log \mathcal{L}(\mathbf{W}^{(r)}) \\
& \leq \left\langle\frac{\nabla\mathcal{L}(\mathbf{W}^{(r)})}{\mathcal{L}(\mathbf{W}^{(r)})}, \mathbf{W}^{(r+1)}-\mathbf{W}^{(r)}\right\rangle\\
& \quad+\|\mathbf{W}^{(r+1)}-\mathbf{W}^{(r)}\|_F^2 \\
& = -\frac{\bar{\eta} T_{\rm avg}}{\mathcal{L}(\mathbf{W}^{(r)})}\left\|\nabla\mathcal{L}(\mathbf{W}^{(r)})+\mathbf{E}^{(r)} + \tilde{\bf E}^{(r)} \right\|_F^2\\
& \quad+\frac{\bar{\eta} T_{\rm avg}}{\mathcal{L}(\mathbf{W}^{(r)})}\left\langle \mathbf{E}^{(r)} + \tilde{\bf E}^{(r)} , \nabla\mathcal{L}(\mathbf{W}^{(r)})+\mathbf{E}^{(r)} + \tilde{\bf E}^{(r)} \right\rangle \\
& \quad+\bar{\eta}^2 T_{\rm avg}^2\|\nabla\mathcal{L}(\mathbf{W}^{(r)})+\mathbf{E}^{(r)} + \tilde{\bf E}^{(r)} \|_F^2.\numberthis
\end{align*}
Subsequently, applying Cauchy-Schwarz and \eqref{eq:er_er_bound} yields
\begin{align*}
& \log \mathcal{L}(\mathbf{W}^{(r+1)})- \log \mathcal{L}(\mathbf{W}^{(r)}) \\ 
& \leq -\frac{\bar{\eta} T_{\rm avg}}{\mathcal{L}(\mathbf{W}^{(r)})}\left\|\nabla\mathcal{L}(\mathbf{W}^{(r)})+\mathbf{E}^{(r)} + \tilde{\bf E}^{(r)} \right\|_F^2\\
&\quad+\frac{\bar{\eta} T_{\rm avg}}{4\mathcal{L}(\mathbf{W}^{(r)})} \left\| \nabla\mathcal{L}(\mathbf{W}^{(r)})+\mathbf{E}^{(r)} + \tilde{\bf E}^{(r)} \right\|_F^2                                \\
& \quad+\bar{\eta}^2 T_{\rm avg}^2\| \nabla\mathcal{L}(\mathbf{W}^{(r)}) + \mathbf{E}^{(r)} + \tilde{\bf E}^{(r)} \|_F^2.\numberthis
\end{align*}%
As 
$\mathcal{L}(\mathbf{W}^{(s)}) \leq 1/(4\bar{\eta} T_{\rm avg})$, 
the right hand side of the above is upper bounded by $0$, implying $\log \mathcal{L}(\mathbf{W}^{(r+1)})\leq \log \mathcal{L}(\mathbf{W}^{(r)})$. 

For the induction step, assuming that \eqref{eq:induction_claim} holds up to certain $r \geq s$. Then, we note that \eqref{eq:induction_claim_a} for $r+1$ can be verified by ${\cal L}({\bf W}^{(r+1)}) \leq {\cal L}({\bf W}^{(r)})$ and ${\cal G}({\bf W}^{(r+1)}) \leq {\cal L}({\bf W}^{(r+1)})$.
Finally, the same argument for establishing \eqref{eq:induction_claim_b} at the base case can be applied to show that ${\cal L}({\bf W}^{(r+2)}) \leq {\cal L}({\bf W}^{(r+1)})$.
This concludes the proof of Theorem \ref{thm:stablep}.

\subsection{Proof of Corollary \ref{thm:stabled}}
We shall prove \eqref{eq:last_iter_risk} through showing that for any $r \geq s$,
\begin{equation} \label{eq:stabled_diff}
    \frac{1}{\mathcal{L}(\mathbf{W}^{(r+1)})}-\frac{1}{\mathcal{L}(\mathbf{W}^{(r)})}\geq \frac{\bar{\eta} T_{\rm avg} \gamma^2}{16}.
\end{equation}
We notice from the proof of Theorem \ref{thm:stablep} that \eqref{eq:induction_claim} holds for any $r \geq s$. As we shall show in Appendix \ref{app:stabled}, the above conditions guarantee that 
\begin{align*}
    \mathcal{L}(\mathbf{W}^{(r+1)}) & \leq {\cal L}( {\bf W}^{(r)} ) - \frac{ \bar{\eta} T_{\rm avg} \gamma^2 }{16} \left( {\cal L} ( {\bf W}^{(r)}) \right)^2.\numberthis \label{eq:stable_recur}
\end{align*}
{Dividing} both sides by $\mathcal{L}(\mathbf{W}^{(r)}) \mathcal{L}(\mathbf{W}^{(r+1)})$, we obtain 
\begin{align*}
    \frac{1}{\mathcal{L}(\mathbf{W}^{(r+1)})}-\frac{1}{\mathcal{L}(\mathbf{W}^{(r)})} & =\frac{\mathcal{L}(\mathbf{W}^{(r)})-\mathcal{L}(\mathbf{W}^{(r+1)})}{\mathcal{L}(\mathbf{W}^{(r)})\mathcal{L}(\mathbf{W}^{(r+1)})} \\
    & \geq \frac{ \bar{\eta} T_{\rm avg} \gamma^2 \mathcal{L}(\mathbf{W}^{(r)})}{16\mathcal{L}(\mathbf{W}^{(r+1)})}.\numberthis
\end{align*}
Since $\mathcal{L}(\mathbf{W}^{(r+1)})\leq \mathcal{L}(\mathbf{W}^{(r)})$, we obtain \eqref{eq:stabled_diff}. It remains to apply a telescoping argument which shows 
\[
\frac{1}{\mathcal{L}(\mathbf{W}^{(r)})}\geq\frac{1}{\mathcal{L}(\mathbf{W}^{(s)})} +\frac{\bar{\eta} T_{\rm avg} \gamma^2}{16}(r-s).\numberthis
\]
Using $\mathcal{L}(\mathbf{W}^{(s)})> 0$ proves \eqref{eq:last_iter_risk} and concludes our proof. 
 
\section{Numerical Experiments}\label{sec:exp}
This section reports the numerical experiments on a multi-class classification task using the large stepsize {\tt FedAvg} algorithm. We consider a synthetic dataset with an input dimension $d=12$, $K=5$ classes, $n=5,000$ samples. Each sample in the global dataset ${\cal D} = {\cal D}_1 \cup \cdots \cup {\cal D}_m$ is constructed via a rejection sampling mechanism such that the linear classifier defined by the weight matrix $\mathbf{W}_\star\in\mathbb{R}^{K\times d}$ satisfies Assumptions \ref{ass:bddf_dist} and \ref{ass:linsep_dist} with $\gamma = 0.001$. Note that under this setup, as ${\cal L}({\bf W})$ is $1/2$-smooth, traditional convergence analysis of {\tt FedAvg} requires a small stepsize as $\bar{\eta} = \eta_l \eta_g \leq 2$.
We use a label‑imbalanced data split: each client receives samples from at most two distinct classes.

\subsection{Effect of Heterogeneous Local Epochs}
We first examine the effect of heterogeneous devices on the convergence behavior. To this end, the dataset is partitioned among $m=8$ clients; each client receives exactly $625$ samples.
In this experiment, we fix the local stepsize at $\eta_l=1$, while the global stepsizes $\eta_g\in\{64, 128\}$ are examined. Two configurations for the number of local update steps, $T_i$, are considered: (i) {\it heterogeneous} local steps, where clients $1$-$4$ perform $T_i=4$ local updates per communication round, while clients $5$-$8$ perform $T_i=50$ updates, yielding $T_\text{avg}=27$; (ii) {\it homogeneous} local steps, where all clients perform $T_i=27$ local steps per round. 

\begin{figure}[t]
\includegraphics[width=1.0\linewidth]{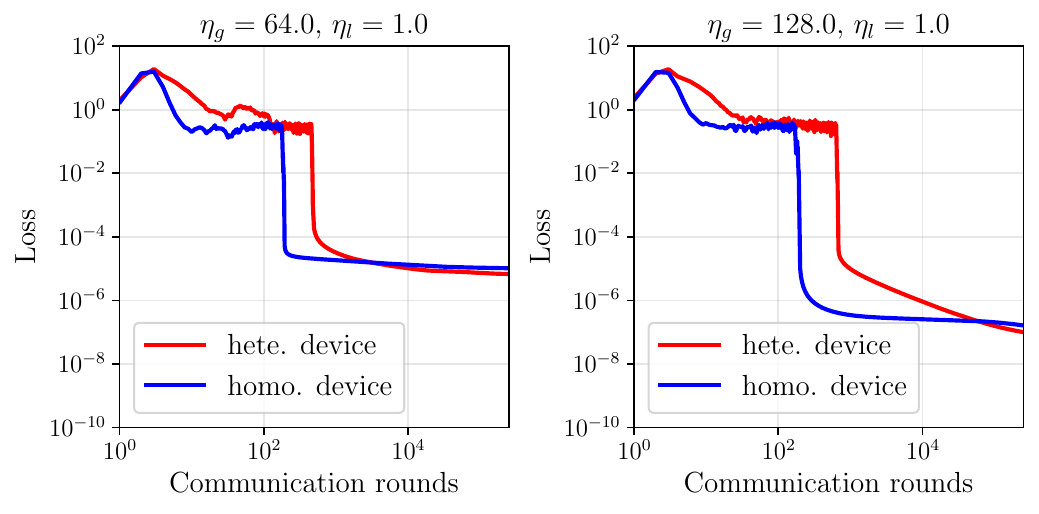}
\caption{Convergence of {\tt FedAvg} with homogeneous (``homo. device'': $T_i=27$ for all devices) and heterogeneous (``hete. device'': $T_i=4$ for device $1$-$4$, $T_i=50$ for device $5$-$8$) devices. Both configurations share the same average $T_\text{avg}=27$ with global/local stepsizes: (Left) $\eta_g=64, \eta_l=1$, (Right) $\eta_g=128,\eta_l=1$.} \vspace{-.2cm} \label{fig:homohetero}
\end{figure}
The {\tt FedAvg} algorithm is executed for $R=250,000$ communication rounds. Fig. \ref{fig:homohetero} shows the trajectory of cross‑entropy loss on the full dataset for each communication round. As observed, in both global stepsize configurations, the heterogeneous device case exhibits a longer transient with larger fluctuations than the homogeneous device case before entering a stable monotonic regime. This behavior is consistent with Theorem \ref{thm:eos_average_loss_bound} which predicts a slower convergence for heterogeneous devices. 
However, we observe that after the transient phase as $R \gg 1$, the convergence behavior of both heterogeneous and homogeneous cases become indistinguishable. This corroborates with Corollary \ref{thm:stabled} which shows that the asymptotic rate is dominated by ${\cal O}( 1 / (T_{\rm avg} R))$, independent of the device heterogeneity\footnote{Note that the distribution of $\{ T_i \}_{i=1}^m$ may violates the condition required by Theorem \ref{thm:stablep}, yet we observe that the predicted behavior continues to hold.}.



\subsection{Effect of Large Local Steps under Label Imbalance}

\begin{figure}[!t]
    \centering
    \vspace{.25cm}
    \includegraphics[width=1.0\linewidth]{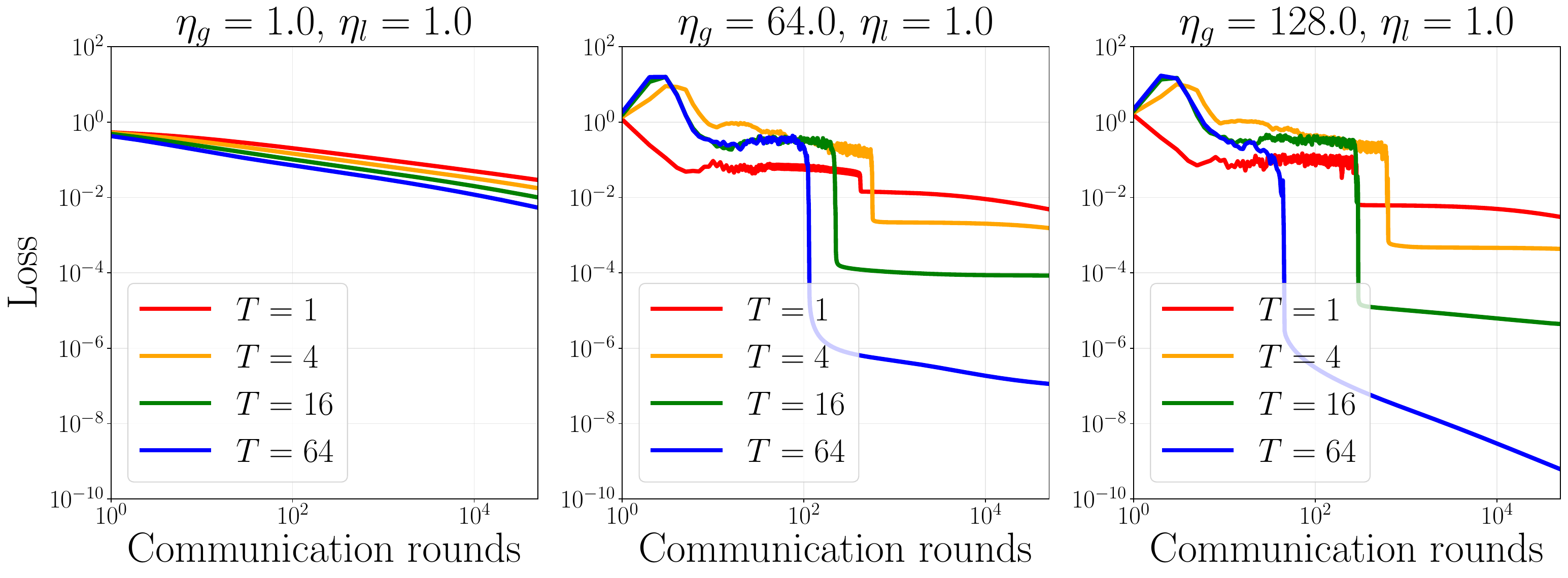}
    \caption{Effects of the number of local steps $T$. All devices perform the same number of local epochs $T \in \{1,4,16,64\}$, with global stepsizes $\eta_g\in\{1, 64, 128\}$ and local stepsize $\eta_l=1$.}\vspace{-.2cm}
    \label{fig:sameT}
\end{figure}

We next examine the effect of increasing the common number of local steps $T = T_i$ for each $i$. All clients perform the same number of local epochs $T \in \{1,4,16,64\}$, with the stepsizes as $\eta_g\in\{1, 64, 128\}$ and $\eta_l=1$.

The {\tt FedAvg} algorithm is executed for $R=50,000$ communication rounds. Fig.~\ref{fig:sameT} plots the cross-entropy loss value against communication rounds with varying $T$. We first observe that for sufficiently large $R$, the convergence rates of all configurations are close to ${\cal O}(1/(T R))$ as predicted by Corollary \ref{thm:stabled}, where increasing $T$ leads to a faster convergence. Furthermore, we note that the large stepsizes settings $\eta_g \in \{64, 128\}$ significantly accelerate the convergence compared to the baseline case with $\eta_g=1$, where the convergence is significantly slower.

\section{Conclusion}\label{sec:conclusion}
This paper has extended the analysis of large stepsize {\tt FedAvg} to the case of heterogeneous devices under a linear separability condition. We showed that the effect of device heterogeneity vanishes asymptotically, where the convergence rate is dominated by the average number of local steps across devices and validated the findings by numerical experiments. For future work, it will be interesting to explore non-separable data and regularized losses. 

\bibliographystyle{ieeetr}
\bibliography{cdc26.bib}

\newpage
\appendices

\section{Proof of Lemma \ref{lem:linsep}}
To prove \eqref{eq:lij_bound}, notice that for any $k \neq y_{i,j}$, Assumption \ref{ass:linsep_dist} guarantees that 
$( \mathbf{w}_{\star,y_{i,j}}- \mathbf{w}_{\star,k})^\top\mathbf{x}_{i,j}\geq \gamma$.
Therefore,  
\begin{align*}
& \ell_{i,j}( \alpha {\bf W}_\star )\numberthis \\
& \textstyle  =\log \Big( 1 + \sum_{k\neq y_{i,j}} \exp( \alpha \mathbf{w}_{\star,k}^\top\mathbf{x}_{i,j}-\alpha \mathbf{w}_{\star,y_{i,j}}^\top\mathbf{x}_{i,j}) \Big)\\
& \leq \log\Big(1+ (K-1) \exp\big( \alpha \max_{k\neq y_{i,j}}(\mathbf{w}_{\star,k}-\mathbf{w}_{\star,y_{i,j}})^\top\mathbf{x}_{i,j}\big)\Big)\\
& \leq \log\left(1+ (K-1) \exp\left( -\alpha \gamma \right) \right) \leq (K-1)\exp\left( -\alpha \gamma \right).
\end{align*}
To prove \eqref{eq:lij_gradient_match}, we first notice that the inner product is
\begin{align*}
    & \langle \nabla \ell_{i,j} ( {\bf W} ), {\bf W}_\star \rangle = \sum_{k=1}^K \left( \widehat{y}_{i,j | {\bf W}}^{(k)} - \mathds{1}(y_{i,j}=k) \right) \mathbf{x}_{i,j}^\top \mathbf{w}_{\star,k},
\end{align*}
where $\mathds{1} (\cdot)$ is the indicator function with $\mathds{1}(y_{i,j}=k) = 1$ for $k = y_{i,j}$ and $0$ otherwise. Under Assumption \ref{ass:linsep_dist}, we have 
\begin{align*}
& \textstyle \sum_{k=1}^K \left( \widehat{y}_{i,j | {\bf W}}^{(k)} - \mathds{1}(y_{i,j}=k) \right) \mathbf{x}_{i,j}^\top \mathbf{w}_{\star,k} \\
& \textstyle = (\widehat{y}_{i,j | {\bf W}} - 1)  \mathbf{x}_{i,j}^\top \mathbf{w}_{\star,y_{i,j}} + \sum_{k \neq y_{i,j}} \widehat{y}_{i,j | {\bf W}}^{(k)} \mathbf{x}_{i,j}^\top \mathbf{w}_{\star,k} \\
& \textstyle \leq (\widehat{y}_{i,j | {\bf W}} - 1)  \mathbf{x}_{i,j}^\top \mathbf{w}_{\star,y_{i,j}} \hspace{-.15cm} + \sum_{k \neq y_{i,j}} \widehat{y}_{i,j | {\bf W}}^{(k)} \left(  \mathbf{x}_{i,j}^\top \mathbf{w}_{\star,y_{i,j}} \hspace{-.15cm} - \gamma \right) \\
& = \textstyle \left( \sum_{k=1}^K \widehat{y}_{i,j | {\bf W}}^{(k)} - 1 \right) \mathbf{x}_{i,j}^\top \mathbf{w}_{\star,y_{i,j}} - \gamma \sum_{k \neq y_{i,j}} \widehat{y}_{i,j | {\bf W}}^{(k)} \\
& =-\gamma (1- \widehat{y}_{i,j | {\bf W}} ).\numberthis
\end{align*} 
To prove \eqref{eq:lij_gradient_bound}, we observe that 
\begin{align*}
    & \lVert \nabla\ell_{i,j}(\mathbf{W})\rVert_F^2 = \textstyle \sum_{k=1}^K \left\| \left( \widehat{y}_{i,j | {\bf W}}^{(k)} - \mathds{1}(y_{i,j}=k) \right) {\bf x}_{i,j} \right\|^2                                                           \\
    & \textstyle =\lVert \mathbf{x}_{i,j}\rVert^2 \Big((1-\widehat{y}_{i,j | {\bf W}})^2+\sum_{k\neq y_{i,j}} (\widehat{y}_{i,j | {\bf W}}^{(k)})^2\Big).\numberthis
\end{align*}
Since $(1- \widehat{y}_{i,j | {\bf W}})^2\leq 1-\widehat{y}_{i,j | {\bf W}}$
and
\begin{align*} \textstyle
    \sum_{k\neq y_{i,j}} (\widehat{y}_{i,j | {\bf W}}^{(k)})^2\leq \sum_{k\neq y_{i,j}} \widehat{y}_{i,j | {\bf W}}^{(k)}=1-\widehat{y}_{i,j | {\bf W}},
\end{align*} we have 
\begin{equation}
    \lVert \nabla\ell_{i,j}(\mathbf{W})\rVert_F^2\leq 2(1-\widehat{y}_{i,j | {\bf W}}).
\end{equation}

\section{Proof of Lemma \ref{lem:cal_e_r}}
We concentrate on the following term for each $i$, 
\[ \textstyle
\sum_{\tau=1}^{T_i} n_i \|\mathbf{W}^{(r)}-\mathbf{D}^{(r)}_{i,\tau-1}\|^2_F.\numberthis
\]
If $T_i = 1$, then the above term is zero. For $T_i \geq 2$, from local update rule
$\mathbf{D}^{(r)}_{i,\tau}=\mathbf{D}^{(r)}_{i,\tau-1}-\eta_l\nabla\mathcal{L}_i(\mathbf{D}^{(r)}_{i,\tau-1})$,
we have
\begin{align*}
&\|\mathbf{D}^{(r)}_{i,\tau}-\mathbf{W}^{(r)}\|_F^2 =\|\mathbf{D}^{(r)}_{i,\tau-1}-\eta_l\nabla\mathcal{L}_i(\mathbf{D}^{(r)}_{i,\tau-1})-\mathbf{W}^{(r)}\|_F^2 \\
&\leq \frac{T_i}{T_i-1}\|\mathbf{D}^{(r)}_{i,\tau-1}-\mathbf{W}^{(r)}\|_F^2 +\eta_l^2T_i\|\nabla \mathcal{L}_i(\mathbf{D}^{(r)}_{i,\tau-1})\|_F^2 \\
& \textstyle \leq \eta_l^2 T_i\sum_{t=1}^{\tau-1} \|\nabla \mathcal{L}_i(\mathbf{D}^{(r)}_{i,t-1})\|_F^2 \left( \frac{T_i}{T_i-1}\right)^{\tau-t}.\numberthis
\end{align*} 
We now proceed to bound
\begin{align*}
& {\cal E}^{(r)} = \frac{\bar{\eta}}{n}\sum_{i=1}^m \sum_{\tau = 2}^{T_i}n_i\|\mathbf{D}^{(r)}_{i,\tau-1}-\mathbf{W}^{(r)}\|_F^2\\&\leq\frac{\bar{\eta}\eta_l^2 }{n}\sum_{i=1}^m  n_i T_i \sum_{\tau=2}^{T_i} \sum_{t=1}^{\tau-1} \|\nabla\mathcal{L}_i(\mathbf{D}^{(r)}_{i,t-1})\|^2_F\left(1+\frac{1}{T_i-1}\right)^{\tau-t} \\
&=\frac{\bar{\eta}\eta_l^2}{n}\sum_{i=1}^m n_i T_i \sum_{t=1}^{T_i - 1} \sum_{\tau=1}^{T_i-t} \|\nabla\mathcal{L}_i(\mathbf{D}^{(r)}_{i,t-1})\|^2_F  \left(1+\frac{1}{T_i-1}\right)^{\tau}. 
\end{align*}
The above can be further bounded by:
\begin{align*}
&\leq\frac{\bar{\eta}\eta_l^2}{n}\sum_{i=1}^m n_i T_i \sum\limits_{t=1}^{T_i - 1} \sum_{\tau=1}^{T_i-1} \|\nabla\mathcal{L}_i(\mathbf{D}^{(r)}_{i,t-1})\|^2_F  \left(1+\frac{1}{T_i-1}\right)^{\tau} \\
&\leq\frac{\bar{\eta}\eta_l^2}{n}\sum\limits_{i=1}^m n_i T_i \sum\limits_{t=1}^{T_i-1} \|\nabla\mathcal{L}_i(\mathbf{D}^{(r)}_{i,t-1})\|^2_F\cdot (e-1)T_i \\
&\leq\frac{2\bar{\eta}\eta_l^2}{n}\sum_{i=1}^m n_i T_i^2 \sum_{t=1}^{T_i-1} \|\nabla\mathcal{L}_i(\mathbf{D}^{(r)}_{i,t-1})\|^2_F\\
&\leq\frac{4\bar{\eta}\eta_l^2}{n}\sum_{i=1}^m T_i^2 \sum_{t=1}^{T_i-1} \sum_{j=1}^{n_i} \left(1-\widehat{y}_{i,j | {\bf D}^{(r)}_{i,t-1}} \right). \numberthis
\end{align*}
 
\section{Missing Proofs of Theorem \ref{thm:stablep}} \label{app:stablep}
We preface the proof by a few additional lemmas.
\begin{lemma}\label{lem:lipcon}
Under Assumption \ref{ass:bddf_dist}. For every $i \in [m]$, the function $\log {\cal L}_i(\mathbf{W})$ is $2$-smooth.
\end{lemma}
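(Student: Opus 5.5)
We will bound the Hessian of $\log \mathcal{L}_i$ directly. For a positive, twice-differentiable $f$ we have $\nabla^2 \log f = \nabla^2 f / f - \nabla f\,\nabla f^\top / f^2$. So for any direction $\mathbf{U}\in\mathbb{R}^{K\times d}$ with $\|\mathbf{U}\|_F=1$, the curvature of $\log\mathcal{L}_i$ along $\mathbf{U}$ is
\[
\frac{\nabla^2\mathcal{L}_i(\mathbf{W})[\mathbf{U},\mathbf{U}]}{\mathcal{L}_i(\mathbf{W})} - \frac{\langle \nabla \mathcal{L}_i(\mathbf{W}),\mathbf{U}\rangle^2}{\mathcal{L}_i(\mathbf{W})^2}.
\]
It suffices to show that the first term lies in $[0,2]$ and the second lies in $[0,2]$. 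Then the curvature lies in $[-2,2]$, which is $2$-smoothness. Both bounds will be reduced, per sample, to the potential $1-\widehat{y}_{i,j|\mathbf{W}}$, and then to $\ell_{i,j}$ via the elementary inequality $1-\widehat{y}\le -\log \widehat{y} = \ell_{i,j}(\mathbf{W})$.

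\emph{Gradient term.} From the gradient formula in the proof of Lemma~\ref{lem:linsep}, $\|\nabla \ell_{i,j}(\mathbf{W})\|_F^2=\|\mathbf{x}_{i,j}\|^2\big((1-\widehat{y})^2+\sum_{k\neq y_{i,j}}(\widehat{y}^{(k)})^2\big)$. Using $\sum_{k\ne y}(\widehat{y}^{(k)})^2\le(\sum_{k\ne y}\widehat{y}^{(k)})^2=(1-\widehat{y})^2$ and Assumption~\ref{ass:bddf_dist}, this is a slight sharpening of \eqref{eq:lij_gradient_bound}:
\[
\|\nabla \ell_{i,j}(\mathbf{W})\|_F\le \sqrt{2}\,(1-\widehat{y}_{i,j|\mathbf{W}})\le \sqrt2\,\ell_{i,j}(\mathbf{W}).
\]
Averaging over $j$ with the triangle inequality gives $\|\nabla\mathcal{L}_i\|_F\le\sqrt2\,\mathcal{L}_i$. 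Hence $\langle\nabla\mathcal{L}_i,\mathbf{U}\rangle^2/\mathcal{L}_i^2\le 2$.

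\emph{Hessian term.} For one sample, write $\mathbf{a}=\mathbf{U}\mathbf{x}_{i,j}\in\mathbb{R}^K$ and $p=(\widehat{y}^{(1)},\dots,\widehat{y}^{(K)})$. The Hessian of the cross-entropy is $(\mathrm{diag}(p)-pp^\top)\otimes \mathbf{x}\mathbf{x}^\top$, so
\[
\nabla^2\ell_{i,j}[\mathbf{U},\mathbf{U}]=\mathrm{Var}_{k\sim p}(a_k)\ge 0 .
\]
Since the variance is at most the second moment about any constant, we take the constant $a_{y}$ with $y=y_{i,j}$:
\[
\mathrm{Var}_p(a)\le\sum_{k\ne y}\widehat{y}^{(k)}(a_k-a_y)^2\le 2\|\mathbf{a}\|^2(1-\widehat{y})\le 2(1-\widehat{y}).
\]
Here we used $(a_k-a_y)^2\le 2(a_k^2+a_y^2)\le 2\|\mathbf{a}\|^2$ and $\|\mathbf{a}\|\le\|\mathbf{U}\|_F\|\mathbf{x}_{i,j}\|\le1$. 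Averaging over $j$ and using $1-\widehat{y}\le\ell_{i,j}$ gives $0\le\nabla^2\mathcal{L}_i[\mathbf{U},\mathbf{U}]\le 2\mathcal{L}_i$.

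Combining the two terms, the curvature along $\mathbf{U}$ lies in $[-2,2]$, so $\log\mathcal{L}_i$ is $2$-smooth. The same argument applies verbatim to $\mathcal{L}$, which is the version used in the main text.

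The main obstacle is the Hessian term. A naive bound through $\|\mathrm{diag}(p)-pp^\top\|_{\rm op}$ gives only a constant, not something proportional to $1-\widehat{y}$. The fix is the variance-about-$a_y$ trick above, which is the step requiring care. A secondary subtlety is to make sure we use the sharpened gradient bound, linear in $1-\widehat{y}$, rather than the square-root bound \eqref{eq:lij_gradient_bound}: the latter would not give $\|\nabla\mathcal{L}_i\|\lesssim\mathcal{L}_i$ when $\mathcal{L}_i$ is small.
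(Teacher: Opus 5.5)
Your proof is correct, and it proves more than the paper's argument does. The paper writes $\nabla^2\log\mathcal{L}_i=\nabla^2\mathcal{L}_i/\mathcal{L}_i-(\nabla\mathcal{L}_i)^{\otimes 2}/\mathcal{L}_i^2$ and simply discards the negative semidefinite rank-one term. It then invokes $\|\nabla^2\mathcal{L}_i\|_{\rm op}\le 2\mathcal{L}_i$ from Lemma~\ref{lmm:selfbound}, which is proved there by Gershgorin on $\mathrm{diag}(p)-pp^\top$ in your notation: the row sums give $\max_k 2p_k(1-p_k)\le 2(1-\widehat{y}_{i,j|\mathbf{W}})$. This yields only the one-sided bound $\nabla^2\log\mathcal{L}_i\preceq 2\mathbf{I}$. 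That is all the descent-type inequalities in Lemma~\ref{thm:stablep_client} and Theorem~\ref{thm:stablep} use, and it needs no gradient bound at all.

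You instead keep the rank-one term and control it with the sharpened self-bound $\|\nabla\mathcal{L}_i\|_F\le\sqrt2\,\mathcal{L}_i$. This gives the genuine two-sided statement $\|\nabla^2\log\mathcal{L}_i\|_{\rm op}\le 2$. With the paper's constant $\sqrt K$ from Lemma~\ref{lmm:selfbound}, you would only get curvature $\ge -K$, so your sharpening is what makes the lower half work. As a by-product, it would also improve the $\sqrt K$ factors in the hypotheses of Theorem~\ref{thm:stablep} to $\sqrt 2$. Your variance-about-$a_y$ argument for the Hessian term is an equally valid alternative to Gershgorin, with the same constant.

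One correction to your closing remark: bounding through $\|\mathrm{diag}(p)-pp^\top\|_{\rm op}$ does not by itself lose the factor $1-\widehat{y}_{i,j|\mathbf{W}}$. The paper's Gershgorin estimate of exactly that operator norm is proportional to it; only a crude constant bound would fail.
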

\begin{lemma} 
    \label{thm:stablep_client}
    Under Assumptions \ref{ass:bddf_dist} and \ref{ass:linsep_dist}. Suppose that there exists $s \geq 0$ with $\mathcal{L}_i(\mathbf{D}^{(r)}_{i,s}) \leq 1/\eta_l$, then for every $\eta_l>0$,
    \[
    \mathcal{L}_i(\mathbf{D}^{(r)}_{i,\tau+1}) \leq \mathcal{L}_i(\mathbf{D}^{(r)}_{i,\tau}),~\forall~\tau \geq s,~\forall~i\in[m].\numberthis
    \] 
\end{lemma}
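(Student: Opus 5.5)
The plan is a single-device analogue of the stable-phase argument for Theorem \ref{thm:stablep}, with the server update replaced by the local gradient step $\mathbf{D}^{(r)}_{i,\tau+1}=\mathbf{D}^{(r)}_{i,\tau}-\eta_l\nabla\mathcal{L}_i(\mathbf{D}^{(r)}_{i,\tau})$. There is no drift or heterogeneity error here, so the error terms $\mathbf{E}^{(r)},\tilde{\mathbf{E}}^{(r)}$ vanish. We argue by induction on $\tau\geq s$, with hypothesis $\mathcal{L}_i(\mathbf{D}^{(r)}_{i,\tau})\leq 1/\eta_l$. The base case $\tau=s$ is exactly the assumption. For the step, once we show $\mathcal{L}_i(\mathbf{D}^{(r)}_{i,\tau+1})\leq\mathcal{L}_i(\mathbf{D}^{(r)}_{i,\tau})$, the hypothesis propagates to $\tau+1$ immediately.

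For the one-step descent, write $\mathbf{D}=\mathbf{D}^{(r)}_{i,\tau}$ and $\mathbf{D}^+=\mathbf{D}^{(r)}_{i,\tau+1}$. Lemma \ref{lem:lipcon} says $\log\mathcal{L}_i$ is $2$-smooth, which gives
\[
\log\mathcal{L}_i(\mathbf{D}^+)-\log\mathcal{L}_i(\mathbf{D})\leq\Big\langle\tfrac{\nabla\mathcal{L}_i(\mathbf{D})}{\mathcal{L}_i(\mathbf{D})},\mathbf{D}^+-\mathbf{D}\Big\rangle+\|\mathbf{D}^+-\mathbf{D}\|_F^2 .
\]
Substituting $\mathbf{D}^+-\mathbf{D}=-\eta_l\nabla\mathcal{L}_i(\mathbf{D})$ turns the right-hand side into
\[
\eta_l\|\nabla\mathcal{L}_i(\mathbf{D})\|_F^2\Big(\eta_l-\tfrac{1}{\mathcal{L}_i(\mathbf{D})}\Big).
\]
This is $\leq 0$ whenever $\mathcal{L}_i(\mathbf{D})\leq 1/\eta_l$. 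Hence $\log\mathcal{L}_i(\mathbf{D}^+)\leq\log\mathcal{L}_i(\mathbf{D})$, and by monotonicity of $\log$ we get $\mathcal{L}_i(\mathbf{D}^+)\leq\mathcal{L}_i(\mathbf{D})$.

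The main obstacle is making sure Lemma \ref{lem:lipcon} is applied in a valid form. Its upper quadratic bound must hold globally along the segment from $\mathbf{D}$ to $\mathbf{D}^+$, which can be long because $\eta_l$ is arbitrary. It also needs $\mathcal{L}_i>0$, which holds since every cross-entropy term is strictly positive. If the lemma were only available as a local (self-bounded) smoothness statement, I would first check that $2$-smoothness of $\log\mathcal{L}_i$ follows from Hessian bounds of the log-sum-exp of the per-sample log-losses. Those bounds are uniform in $\mathbf{W}$ under Assumption \ref{ass:bddf_dist}, so the descent inequality holds for arbitrary step lengths. Everything else in the argument is a direct calculation, and it holds for all $i\in[m]$ and every round $r$ simultaneously.
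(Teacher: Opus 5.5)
Your proposal is correct and follows the paper's own proof. Both apply the $2$-smoothness of $\log\mathcal{L}_i$ (Lemma~\ref{lem:lipcon}) to the local gradient step to obtain $\log\mathcal{L}_i(\mathbf{D}^{(r)}_{i,\tau+1})\le\log\mathcal{L}_i(\mathbf{D}^{(r)}_{i,\tau})-\eta_l(1/\mathcal{L}_i(\mathbf{D}^{(r)}_{i,\tau})-\eta_l)\|\nabla\mathcal{L}_i(\mathbf{D}^{(r)}_{i,\tau})\|_F^2$, which is nonpositive once the loss is below $1/\eta_l$. Your explicit induction that carries the threshold $\mathcal{L}_i(\mathbf{D}^{(r)}_{i,\tau})\le 1/\eta_l$ forward to every $\tau\ge s$ is a step the paper leaves implicit, so your write-up is slightly more complete.
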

\begin{lemma}
    \label{lmm:selfbound}
    Under Assumption \ref{ass:bddf_dist}, for any $\mathbf{W}$, we have \[
        \lVert \nabla \mathcal{L}(\mathbf{W})\rVert_F\leq \sqrt{K} \mathcal{L}(\mathbf{W}), ~~
        \lVert \nabla^2 \mathcal{L}(\mathbf{W})\rVert_\emph{op}\leq 2\mathcal{L}(\mathbf{W}).\numberthis
    \]
\end{lemma}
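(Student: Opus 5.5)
The plan is to prove both inequalities for a single sample loss $\ell_{i,j}$ and then average. Since $\mathcal{L}=\frac1n\sum_{i,j}\ell_{i,j}$, the triangle inequality for $\|\cdot\|_F$ and for $\|\cdot\|_{\rm op}$ passes pointwise bounds of the form $\|\nabla \ell_{i,j}\|_F\le c\,\ell_{i,j}$ and $\|\nabla^2\ell_{i,j}\|_{\rm op}\le 2\ell_{i,j}$ to $\mathcal{L}$. Throughout I write $p_k:=\widehat{y}^{(k)}_{i,j|\mathbf{W}}$, $y:=y_{i,j}$ and $\mathbf{x}:=\mathbf{x}_{i,j}$. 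The link between probabilities and loss is the elementary inequality $1-p_y\le -\log p_y=\ell_{i,j}(\mathbf{W})$.

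\emph{Gradient bound.} As in the proof of Lemma~\ref{lem:linsep}-(iii), $\nabla\ell_{i,j}(\mathbf{W})=(\mathbf{p}-\mathbf{e}_y)\mathbf{x}^\top$. Hence
\[
\|\nabla\ell_{i,j}(\mathbf{W})\|_F^2=\|\mathbf{x}\|^2\Big((1-p_y)^2+\sum_{k\neq y}p_k^2\Big).
\]
Using $\sum_{k\neq y}p_k^2\le(\sum_{k\neq y}p_k)^2=(1-p_y)^2$ and Assumption~\ref{ass:bddf_dist}, this is at most $2(1-p_y)^2$. Therefore $\|\nabla\ell_{i,j}\|_F\le\sqrt2(1-p_y)\le\sqrt2\,\ell_{i,j}\le\sqrt K\,\ell_{i,j}$, since $K\ge2$. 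Averaging over samples gives the first claim.

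\emph{Hessian bound.} The softmax cross-entropy Hessian is $\nabla^2\ell_{i,j}(\mathbf{W})=(\mathrm{diag}(\mathbf{p})-\mathbf{p}\mathbf{p}^\top)\otimes\mathbf{x}\mathbf{x}^\top$. Its operator norm is $\|\mathbf{x}\|^2\lambda_{\max}(\mathrm{diag}(\mathbf{p})-\mathbf{p}\mathbf{p}^\top)\le\lambda_{\max}(\mathrm{diag}(\mathbf{p})-\mathbf{p}\mathbf{p}^\top)$. For a unit vector $\mathbf{v}\in\mathbb{R}^K$, the quadratic form $\mathbf{v}^\top(\mathrm{diag}(\mathbf{p})-\mathbf{p}\mathbf{p}^\top)\mathbf{v}$ is the variance of $v_k$ under $k\sim\mathbf{p}$. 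A variance is at most the mean squared deviation about any constant, and I would take the constant to be $v_y$:
\[
\mathrm{Var}_{\mathbf{p}}(v)\le\sum_{k\neq y}p_k(v_k-v_y)^2\le(1-p_y)\max_{k\neq y}(v_k-v_y)^2\le 2(1-p_y).
\]
The last step uses $(v_k-v_y)^2\le 2(v_k^2+v_y^2)\le2\|\mathbf{v}\|^2$. Hence $\|\nabla^2\ell_{i,j}\|_{\rm op}\le2(1-p_y)\le2\ell_{i,j}$, and averaging gives $\|\nabla^2\mathcal{L}\|_{\rm op}\le2\mathcal{L}$.

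The main obstacle is the Hessian step, specifically obtaining a factor proportional to $1-p_y$ rather than the crude bound $\lambda_{\max}\le\max_k p_k$. Centering the variance at the label coordinate $v_y$ is the trick that resolves it. Everything else is routine given Assumption~\ref{ass:bddf_dist} and the inequality $1-p_y\le\ell_{i,j}$.
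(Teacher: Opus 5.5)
Your proof is correct. The gradient half matches the paper, and the Hessian half takes a genuinely different route to the same per-sample bound.

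\textbf{Gradient.} The paper bounds each $p_k^2$ with $k\neq y$ separately by $(1-p_y)^2$, which is where its $\sqrt{K}$ comes from. Your inequality $\sum_{k\neq y}p_k^2\le(\sum_{k\neq y}p_k)^2$ instead gives the dimension-free bound $\|\nabla\ell_{i,j}\|_F\le\sqrt{2}\,(1-p_y)$. This is sharper whenever $K\ge 3$. It is also worth noticing that the $\sqrt{K}$ factors in the hypotheses of Theorem~\ref{thm:stablep} enter only through this gradient bound, applied to each $\mathcal{L}_i$. Your per-sample argument covers $\mathcal{L}_i$ equally well, so it would let those factors be replaced by $\sqrt{2}$.

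\textbf{Hessian.} The paper works blockwise. It applies a block Gershgorin bound to get $\|\nabla^2\ell_{i,j}\|_{\rm op}\le 2\max_k p_k(1-p_k)$. It then checks two cases ($k=y$ versus $k\neq y$) to show $p_k(1-p_k)\le 1-p_y$.

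You instead factor the Hessian as $(\mathrm{diag}(\mathbf{p})-\mathbf{p}\mathbf{p}^\top)\otimes\mathbf{x}\mathbf{x}^\top$ and read its quadratic form as a variance under $\mathbf{p}$. Centering that variance at the label coordinate $v_y$ produces the factor $1-p_y$ in one line, with no case split. It also shows where the constant $2$ comes from: the extremal configuration is $v_k=-v_y=1/\sqrt{2}$, so the bound $\max_{k\neq y}(v_k-v_y)^2\le 2$ cannot be improved.

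Both routes end at exactly $\|\nabla^2\ell_{i,j}\|_{\rm op}\le 2(1-p_y)\le 2\ell_{i,j}$. The paper's argument is more mechanical and never needs to identify the Kronecker structure. Yours is shorter and more conceptual.
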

\begin{lemma} \label{lmm:gradpot} 
    Under Assumptions \ref{ass:bddf_dist} \&
    \ref{ass:linsep_dist}, for every $\eta_g, \eta_l>0$,
    \begin{align*}
    & \frac{1}{R}\sum_{r=0}^{R-1} \mathcal{G}(\mathbf{W}^{(r)}) \leq \frac{\sqrt{2} + \| \mathbf{W}^{(0)} \|_F }{ \bar{\eta} \gamma^2 R }\\
    & + \frac{ \log(\gamma^2\bar{\eta} R(K-1)T_{\rm avg})+\bar{\eta}T_{\rm avg}+{\eta}_l^2 T_{\max}^2 }{\bar{\eta}\gamma^2 R / 2}. \numberthis \label{eq:gradpot_bound}
    \end{align*}
    Furthermore, if $\mathcal{G}(\mathbf{W})\leq 1/(2n)$, then 
    \[
    \|\nabla\mathcal{L}(\mathbf{W})\|_F\geq (\gamma/2)\mathcal{L}(\mathbf{W}).
    \]
\end{lemma}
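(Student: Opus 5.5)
The plan is to prove the two claims of Lemma~\ref{lmm:gradpot} separately. The average bound \eqref{eq:gradpot_bound} comes from a \emph{linear} (first-order) potential along ${\bf W}_\star$, reusing the norm control already established in the proof of Theorem~\ref{thm:eos_average_loss_bound}. The gradient lower bound comes from Lemma~\ref{lem:linsep}-(ii) together with the elementary inequality $-\log(1-u)\le 2u$ for $u\in[0,1/2]$.

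\emph{Average bound.} Write $\bm{\Delta}{\bf W}^{(r)} = -\frac{\bar\eta}{n}\sum_i\sum_{\tau=1}^{T_i} n_i \nabla{\cal L}_i({\bf D}^{(r)}_{i,\tau-1})$ and pair it with ${\bf W}_\star$. By Lemma~\ref{lem:linsep}-(ii), every summand contributes a nonnegative amount:
\[
\langle \bm{\Delta}{\bf W}^{(r)},{\bf W}_\star\rangle \ge \gamma\bar\eta\sum_{i}\frac{n_i}{n}\sum_{\tau=1}^{T_i}{\cal G}_i({\bf D}^{(r)}_{i,\tau-1}).
\]
I will drop all terms except $\tau=1$, which is evaluated at ${\bf D}^{(r)}_{i,0}={\bf W}^{(r)}$. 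This gives $\langle \bm{\Delta}{\bf W}^{(r)},{\bf W}_\star\rangle\ge\gamma\bar\eta\,{\cal G}({\bf W}^{(r)})$. Summing over $r=0,\dots,R-1$ and using $\|{\bf W}_\star\|_F\le1$ yields
\[
\gamma\bar\eta\sum_{r<R}{\cal G}({\bf W}^{(r)})\le \|{\bf W}^{(R)}-{\bf W}^{(0)}\|_F.
\]

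To bound the right-hand side, I will use the recursion \eqref{eq:recur_final}, which gives $\|{\bf W}^{(R)}-{\bf U}\|_F\le \|{\bf W}^{(0)}-{\bf U}\|_F+\sqrt2/\gamma$. Rather than the squared estimate \eqref{eq:bound_u}, I will use the linear bound $\|{\bf U}\|_F\le(\log(\gamma^2\bar\eta R(K-1)T_{\rm avg})+\bar\eta T_{\rm avg}+\eta_l^2T_{\max}^2)/\gamma$, which holds since $\|{\bf W}_\star\|_F\le1$. The triangle inequality then gives
\[
\|{\bf W}^{(R)}-{\bf W}^{(0)}\|_F \le \|{\bf W}^{(R)}-{\bf U}\|_F+\|{\bf U}-{\bf W}^{(0)}\|_F \le 2\|{\bf W}^{(0)}-{\bf U}\|_F+\sqrt2/\gamma .
\]
Hence $\|{\bf W}^{(R)}-{\bf W}^{(0)}\|_F\le 2\|{\bf W}^{(0)}\|_F+2\|{\bf U}\|_F+\sqrt2/\gamma$. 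Dividing by $\gamma\bar\eta R$ produces the factor $2$ on the log, $\bar\eta T_{\rm avg}$ and $\eta_l^2T_{\max}^2$ terms, and the $\sqrt2/(\bar\eta\gamma^2R)$ term, exactly as in \eqref{eq:gradpot_bound}.

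The step I expect to be the main obstacle is matching the initialization term. This route gives $2\|{\bf W}^{(0)}\|_F/(\bar\eta\gamma R)$ rather than the stated $\|{\bf W}^{(0)}\|_F/(\bar\eta\gamma^2R)$. Since Assumption~\ref{ass:linsep_dist} only forces $\gamma\le\sqrt2$, the two coincide only when $\gamma\le 1/2$. I would therefore either reorganize the telescoping to remove the factor $2$ in front of $\|{\bf W}^{(0)}\|_F$ (for instance by telescoping $\langle{\bf W}^{(r)},{\bf W}_\star\rangle$ directly against $\|{\bf W}^{(R)}\|_F$ and handling ${\bf W}^{(0)}$ separately), or record the slightly different constant. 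The ${\cal O}(1/R)$ rate that is actually used downstream is unaffected either way.

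\emph{Gradient lower bound.} Suppose ${\cal G}({\bf W})\le 1/(2n)$. Every summand in ${\cal G}({\bf W})$ is nonnegative, so each single sample satisfies $(1-\widehat y_{i,j|{\bf W}})/n\le{\cal G}({\bf W})\le 1/(2n)$, that is, $1-\widehat y_{i,j|{\bf W}}\le 1/2$. For $u\in[0,1/2]$ we have $-\log(1-u)\le 2u$, because the difference is convex with value $0$ at $u=0$ and value $1-\log 2>0$ at $u=1/2$. Applying this with $u=1-\widehat y_{i,j|{\bf W}}$ gives $\ell_{i,j}({\bf W})\le 2(1-\widehat y_{i,j|{\bf W}})$, and averaging over samples gives ${\cal L}({\bf W})\le 2{\cal G}({\bf W})$. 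Finally, Lemma~\ref{lem:linsep}-(ii) averaged over all samples gives $\langle-\nabla{\cal L}({\bf W}),{\bf W}_\star\rangle\ge\gamma{\cal G}({\bf W})$. By Cauchy--Schwarz with $\|{\bf W}_\star\|_F\le1$,
\[
\|\nabla{\cal L}({\bf W})\|_F\ge\gamma{\cal G}({\bf W})\ge(\gamma/2){\cal L}({\bf W}).
\]
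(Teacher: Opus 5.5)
Your argument is essentially the paper's. For the first claim the paper also pairs the server update with $\mathbf W_\star$, applies Lemma~\ref{lem:linsep}-(ii), keeps only the $\tau=1$ term (evaluated at $\mathbf W^{(r)}$), and telescopes $\langle\mathbf W^{(r)},\mathbf W_\star\rangle$. The only difference is the last step. The paper bounds $\langle\mathbf W^{(R)}-\mathbf W^{(0)},\mathbf W_\star\rangle\le\|\mathbf W^{(R)}\|_F+\|\mathbf W^{(0)}\|_F$ and invokes \eqref{eq:param_bound}, whereas you pass through $\mathbf U$ with the triangle inequality.

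The second claim is argued as in the paper, which uses $-\log z\le(1-z)/z\le 2(1-z)$ for $z\ge 1/2$. There is one slip: $2u+\log(1-u)$ is concave, not convex, on $[0,1/2]$. Concavity together with nonnegative endpoint values is what gives $-\log(1-u)\le 2u$; alternatively, its derivative $2-1/(1-u)$ is nonnegative there.

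The obstruction you flag is real, and it is not an artifact of your route. The paper's finishing step also yields $2\|\mathbf W^{(0)}\|_F/(\bar\eta\gamma R)$ once \eqref{eq:param_bound} is substituted. The repair you propose (telescoping against $\|\mathbf W^{(R)}\|_F$ and handling $\mathbf W^{(0)}$ separately) is exactly that step, so it cannot remove the factor.

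In fact no repair exists, because the displayed inequality is false for $\gamma$ near $\sqrt2$. Take $K=2$, $m=n=1$, $T_1=1$, $\eta_l=\eta_g=1$, and a single sample $\bm x$ with $\|\bm x\|=1$ and label $1$. Then $\mathbf W_\star=\frac{1}{\sqrt2}[\bm x^\top;-\bm x^\top]$ has $\|\mathbf W_\star\|_F=1$ and margin $\sqrt2$, so $\gamma=\sqrt2$ is admissible. Here {\tt FedAvg} is gradient descent. Writing $z_r:=(\mathbf w^{(r)}_1-\mathbf w^{(r)}_2)^\top\bm x$ and $\sigma$ for the logistic function, one has $\mathcal G(\mathbf W^{(r)})=1-\sigma(z_r)$ and $z_{r+1}=z_r+2(1-\sigma(z_r))\le z_r+2$. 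Start from $\mathbf W^{(0)}=-\frac{100}{\sqrt2}[\bm x^\top;-\bm x^\top]$, so $\|\mathbf W^{(0)}\|_F=100$ and $z_0=-100\sqrt2$, and take $R=60$. Then $z_r\le-23$ for all $r<R$, so the left side of \eqref{eq:gradpot_bound} exceeds $0.99$. Its right side equals $\frac{\sqrt2+100}{120}+\frac{\log 120+2}{60}<0.96$.

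So take your second option and record the constant your argument actually proves: $2\|\mathbf W^{(0)}\|_F/(\bar\eta\gamma R)$ in place of $\|\mathbf W^{(0)}\|_F/(\bar\eta\gamma^2 R)$. This implies the stated form only when $\gamma\le 1/2$. Nothing downstream is affected, since Theorem~\ref{thm:stablep} and Corollary~\ref{thm:stabled} use only the second claim.

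On the remaining terms your route is also the more careful one. The triangle inequality behind \eqref{eq:param_bound} actually gives $\|\mathbf W^{(0)}\|_F+2\|\mathbf U\|_F+\sqrt2/\gamma$, not the printed $\|\mathbf W^{(0)}\|_F+\|\mathbf U\|_F+\sqrt2/\gamma$. It is your derivation that produces the factor $2$ on the logarithmic and stepsize terms that the statement displays.
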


\noindent
\textbf{Proof of \eqref{eq:er_er_bound}.} Our main focus is to prove \eqref{eq:er_er_bound}.
Let $C= \{(1-t)\mathbf{W}^{(r)}+t\mathbf{D}^{(r)}_{i,\tau-1}:t\in(0,1) \}$,  notice that 
\begin{align*}
    & \|\mathbf{E}^{(r)}\|_F \leq \sum_{i=1}^m \sum_{\tau=1}^{T_i} \frac{n_i}{nT_{\rm avg}} \|\nabla\mathcal{L}_i(\mathbf{D}_{i,\tau-1}^{(r)})-\nabla\mathcal{L}_i(\mathbf{W}^{(r)})\|_F \\
    & {\leq} \sum_{i=1}^m \sum_{\tau=1}^{T_i} \frac{n_i}{nT_{\rm avg}} \max_{\mathbf{W}\in C} \|\nabla^2\mathcal{L}_i(\mathbf{W})\|_{\rm op} \left\|\mathbf{D}_{i,\tau-1}^{(r)}-\mathbf{W}^{(r)}\right\|_F \\
    & {\leq} \sum_{i=1}^m \sum_{\tau=1}^{T_i} \frac{2 n_i}{nT_{\rm avg}} \max_{\mathbf{W}\in C} \mathcal{L}_i(\mathbf{W}) \left\|\mathbf{D}_{i,\tau-1}^{(r)}-\mathbf{W}^{(r)}\right\|_F,
\end{align*}
where the second last inequality is due to Lemma \ref{lmm:selfbound}.
Using Lemma \ref{thm:stablep_client}, $\mathcal{L}_i(\mathbf{D}_{i,\tau-1}^{(r)})\leq \mathcal{L}_i(\mathbf{W}^{(r)})$ for all $\tau\geq 1$, we have
\begin{align*}
    & \|\mathbf{E}^{(r)}\|_F \leq \frac{2}{nT_{\rm avg}}\sum_{i=1}^m \sum_{\tau=1}^{T_i} n_i \mathcal{L}_i(\mathbf{W}^{(r)})\left\|\mathbf{D}_{i,\tau-1}^{(r)}-\mathbf{W}^{(r)}\right\|_F \\
    & \leq \frac{2}{nT_{\rm avg}}\sum_{i=1}^m \sum_{\tau=1}^{T_i-1} n_i \mathcal{L}_i(\mathbf{W}^{(r)})\left\|\sum\limits_{t=1}^{\tau}\eta_l\nabla\mathcal{L}_i(\mathbf{D}_{i,t-1}^{(r)})\right\|_F  \\
    & \leq  \frac{2\eta_l}{nT_{\rm avg}} \sum_{i=1}^m\sum_{\tau=1}^{T_i-1} \sum\limits_{t=1}^{\tau}n_i \mathcal{L}_i(\mathbf{W}^{(r)})\left\|\nabla\mathcal{L}_i(\mathbf{D}_{i,t-1}^{(r)})\right\|_F.\numberthis
\end{align*}
Lemma \ref{lmm:selfbound} shows $\|\nabla\mathcal{L}_i(\mathbf{D}_{i,t-1}^{(r)})\|_F \leq \sqrt{K}\mathcal{L}_i(\mathbf{D}_{i,t-1}^{(r)})$, and hence we can bound $\| {\bf E}^{(r)} \|_F$ by
\begin{align*}
    & \textstyle \frac{2\eta_l}{nT_{\rm avg}} \sum_{i=1}^m \sum_{\tau=1}^{T_i-1} \sum\limits_{t=1}^{\tau}n_i \mathcal{L}_i(\mathbf{W}^{(r)})(\sqrt{K}\mathcal{L}_i(\mathbf{D}_{i,t-1}^{(r)}))\\
    & \textstyle \leq  \frac{2\eta_l\sqrt{K}}{nT_{\rm avg}} \sum_{i=1}^m (n_iT_i\mathcal{L}_i(\mathbf{W}^{(r)}))^2\\
    & \textstyle \leq  \frac{2\eta_l\sqrt{K}T_{\rm max}^2}{nT_{\rm avg}} \sum_{i=1}^m (n_i \mathcal{L}_i(\mathbf{W}^{(r)}))^2  \\
    & \leq 
    \frac{2n\eta_l\sqrt{K}T_{\rm max}^2}{T_{\rm avg}} \left(\mathcal{L}(\mathbf{W}^{(r)})\right)^2.\numberthis
\end{align*}
Using $\mathcal{L}(\mathbf{W}^{(r)})\leq \gamma T_{\rm avg} /(40 n\eta_l T_{\max}^2 \sqrt{K})$ and  Lemma \ref{lmm:gradpot}, we have that
$\|\mathbf{E}^{(r)}\|_F \leq \frac{\gamma}{20} \mathcal{L}(\mathbf{W}^{(r)}) \leq \frac{1}{10}\|\nabla\mathcal{L}(\mathbf{W}^{(r)})\|_F$.

Next, we bound $\|\tilde{\mathbf{E}}^{(r)}\|_F$. Observe that 
\begin{align*}
    & \tilde{\bf E}^{(r)} = \frac{1}{nT_{\rm avg}}\sum_{i=1}^m n_i T_i \cdot\left( \nabla\mathcal{L}_i (\mathbf{W}^{(r)}) - \nabla\mathcal{L} (\mathbf{W}^{(r)}) \right) \\
    &= \frac{1}{nT_{\rm avg}}\sum_{i=1}^m n_i T_i \nabla\mathcal{L}_i (\mathbf{W}^{(r)}) - \nabla {\cal L} (\mathbf{W}^{(r)})  \\
    &= \frac{1}{nT_{\rm avg}}\sum_{i=1}^m n_i T_i \nabla\mathcal{L}_i (\mathbf{W}^{(r)}) - \frac{1}{n}\sum_{i=1}^m n_i \nabla\mathcal{L}_i (\mathbf{W}^{(r)}) \\
    &= \frac{1}{n}\sum_{i=1}^m n_i \left(\frac{T_i}{T_{\rm avg}}-1\right)\nabla\mathcal{L}_i (\mathbf{W}^{(r)}). \numberthis
\end{align*}
By Lemma \ref{lmm:selfbound}, $\| \nabla\mathcal{L}_i (\mathbf{W}^{(r)}) \|_F \leq \sqrt{K}\mathcal{L}_i(\mathbf{W}^{(r)})$. Further using the last inequality in Lemma \ref{lmm:gradpot}, we have 
\begin{align*}
    \|\tilde{\bf E}^{(r)}\|_F & \textstyle \leq \frac{1}{n}\sum_{i=1}^m n_i \left|\frac{T_i}{T_{\rm avg}}-1\right|\cdot \| \nabla\mathcal{L}_i (\mathbf{W}^{(r)}) \|_F \\
    & \textstyle \leq \frac{\sqrt{K}}{n}\sum_{i=1}^m n_i \left|\frac{T_i}{T_{\rm avg}}-1\right|\cdot \mathcal{L}_i(\mathbf{W}^{(r)}) \\
    & \textstyle \leq \max_i \left|\frac{T_i}{T_{\rm avg}}-1\right| \, \sqrt{K} \frac{2}{\gamma} \|\nabla\mathcal{L}(\mathbf{W}^{(r)})\|_F. \numberthis
\end{align*}
Under the assumption that $\max_i |T_i/T_{\rm avg}-1| \leq \gamma/(20\sqrt{K})$, we have that $\|\tilde{\bf E}^{(r)}\|_F \leq \frac{1}{10}\|\nabla\mathcal{L}(\mathbf{W}^{(r)})\|_F$.

\subsection{Proof of Lemma \ref{lem:lipcon}}
To verify that $\log \mathcal{L}_i(\mathbf{W})$ is 2-smooth, we observe that:
\begin{align*}
    & \nabla^2 \log \mathcal{L}_i(\mathbf{W}) \\
    & =\frac{1}{\mathcal{L}_i(\mathbf{W})^2}\biggl(\nabla^2\mathcal{L}_i(\mathbf{W})\cdot \mathcal{L}_i(\mathbf{W}) - (\nabla\mathcal{L}_i(\mathbf{W}))^{\otimes 2}\biggr) \\
    & \preceq \frac{1}{\mathcal{L}_i(\mathbf{W})}\nabla^2\mathcal{L}_i(\mathbf{W})  \preceq \frac{1}{\mathcal{L}_i(\mathbf{W})} \mathcal{L}_i(\mathbf{W})\cdot 2\mathbf{I}=2\mathbf{I},\numberthis
\end{align*}
where the second last inequality comes from Lemma \ref{lmm:selfbound}. 

\subsection{Proof of Lemma \ref{thm:stablep_client}}
Using the $2$-smoothness of $\log \mathcal{L}_i(\cdot)$, we get \begin{align*}
& \log \mathcal{L}_i(\mathbf{D}^{(r)}_{i,\tau+1}) \numberthis\\
& \textstyle \leq \log \mathcal{L}_i(\mathbf{D}^{(r)}_{i,\tau})+\left\langle\frac{\nabla \mathcal{L}_i(\mathbf{D}^{(r)}_{i,\tau})}{\mathcal{L}_i(\mathbf{D}^{(r)}_{i,\tau})}, \mathbf{D}^{(r)}_{i,\tau+1}-\mathbf{D}^{(r)}_{i,\tau}\right\rangle \\
&\quad+ \lVert \mathbf{D}^{(r)}_{i,\tau+1}-\mathbf{D}^{(r)}_{i,\tau}\rVert^2_F\\
& \textstyle =\log \mathcal{L}_i(\mathbf{D}^{(r)}_{i,\tau})-\eta_l\left(\frac{1}{\mathcal{L}_i(\mathbf{D}^{(r)}_{i,\tau})}-\eta_l\right)\lVert\nabla\mathcal{L}_i(\mathbf{D}^{(r)}_{i,\tau})\rVert^2_F.\
\end{align*}
The above implies that if $\mathcal{L}(\mathbf{D}^{(r)}_{i,\tau})\leq \frac{1}{\eta_l}$, then 
\begin{equation}
    \mathcal{L}(\mathbf{D}^{(r)}_{i,\tau+1})\leq \mathcal{L}(\mathbf{D}^{(r)}_{i,\tau}).
\end{equation}

\section{Missing Proof of Corollary \ref{thm:stabled}} \label{app:stabled}
We shall prove \eqref{eq:stable_recur} in this section. By letting $\mathbf{V}=\lambda\mathbf{W}^{(r)}+(1-\lambda)\mathbf{W}^{(r+1)}, \lambda\in(0,1)$, we have 
\begin{align*}
    & \mathcal{L}(\mathbf{W}^{(r+1)}) \leq \mathcal{L}(\mathbf{W}^{(r)}) \numberthis\\
    & - \bar{\eta}T_{\rm avg}\lVert \nabla \mathcal{L}(\mathbf{W}^{(r)}) \rVert_F^2 -\bar{\eta}T_{\rm avg} \langle {\bf E}^{(r)} + \tilde{\bf E}^{(r)}, \nabla \mathcal{L}(\mathbf{W}^{(r)}) \rangle \\
    & +\frac{ (\bar{\eta}T_{\rm avg})^2 }{2}\lVert\nabla^2\mathcal{L}(\mathbf{\mathbf{V}})\rVert_{\text{op}} \lVert \nabla\mathcal{L}(\mathbf{W}^{(r)}) + {\bf E}^{(r)} + \tilde{\bf E}^{(r)} \rVert_F^2. 
\end{align*}
Using Lemma \ref{lmm:selfbound}, we have $\lVert\nabla^2\mathcal{L}(\mathbf{\mathbf{V}})\rVert_{\text{op}} \leq 2 {\cal L}(\mathbf{V}) \leq 2(\lambda {\cal L}(\mathbf{W}^{(r)})+(1-\lambda)\mathcal{L}(\mathbf{W}^{(r+1)}))\leq 2\mathcal{L}(\mathbf{W}^{(r)})$, where we again used $\mathcal{L}(\mathbf{W}^{(r+1)}) \leq \mathcal{L}(\mathbf{W}^{(r)})$. Thus,
\begin{align*}
    & \mathcal{L}(\mathbf{W}^{(r+1)}) \leq {\cal L}( {\bf W}^{(r)} ) - \bar{\eta}T_{\rm avg}\lVert \nabla \mathcal{L}(\mathbf{W}^{(r)}) \rVert_F^2  \\
    &   -\bar{\eta}T_{\rm avg} \langle {\bf E}^{(r)} + \tilde{\bf E}^{(r)}, \nabla \mathcal{L}(\mathbf{W}^{(r)}) \rangle \\
    & + 2 (\bar{\eta}T_{\rm avg})^2 \mathcal{L}(\mathbf{W}^{(r)}) \lVert \nabla\mathcal{L}(\mathbf{W}^{(r)}) \rVert_F^2 \\
    & + 2 (\bar{\eta}T_{\rm avg})^2 \mathcal{L}(\mathbf{W}^{(r)}) \lVert {\bf E}^{(r)} + \tilde{\bf E}^{(r)} \rVert_F^2. \numberthis
\end{align*}
Note we have $2 (\bar{\eta}T_{\rm avg}) \mathcal{L}(\mathbf{W}^{(r)}) \leq 1/2$ by the hypothesis in this corollary. It yields
\begin{align*}
    & \mathcal{L}(\mathbf{W}^{(r+1)}) \leq {\cal L}( {\bf W}^{(r)} ) - ( \bar{\eta}T_{\rm avg} / 2 ) \lVert \nabla \mathcal{L}(\mathbf{W}^{(r)}) \rVert_F^2  \\
    &   -\bar{\eta}T_{\rm avg} \langle {\bf E}^{(r)} + \tilde{\bf E}^{(r)}, \nabla \mathcal{L}(\mathbf{W}^{(r)}) \rangle \\
    & + 2 (\bar{\eta}T_{\rm avg})^2 \mathcal{L}(\mathbf{W}^{(r)}) \lVert {\bf E}^{(r)} + \tilde{\bf E}^{(r)} \rVert_F^2. \numberthis
\end{align*}
By Cauchy-Schwarz inequality and Lemma \ref{lmm:selfbound}, we further notice that 
\begin{align*}
& -\bar{\eta}T_{\rm avg} \langle {\bf E}^{(r)} + \tilde{\bf E}^{(r)}, \nabla \mathcal{L}(\mathbf{W}^{(r)}) \rangle \\
& + 2 (\bar{\eta}T_{\rm avg})^2 \mathcal{L}(\mathbf{W}^{(r)}) \lVert {\bf E}^{(r)} + \tilde{\bf E}^{(r)} \rVert_F^2 \\
& \leq \frac{ \bar{\eta} T_{\rm avg} }{5} \| \nabla {\cal L}( {\bf W}^{(r)} )  \|_F^2 + \frac{ \bar{\eta}T_{\rm avg}}{50} \| \nabla {\cal L}( {\bf W}^{(r)} )  \|_F^2.\numberthis
\end{align*}
Combining terms yields
\[
\mathcal{L}(\mathbf{W}^{(r+1)}) \leq {\cal L}( {\bf W}^{(r)} ) - \frac{ \bar{\eta} T_{\rm avg} }{4} \| {\cal L} ( {\bf W}^{(r)}) \|_F^2.\numberthis
\]
Further using $\|\nabla\mathcal{L}(\mathbf{W}^{(r)})\|_F\geq \frac{\gamma}{2}\mathcal{L}(\mathbf{W}^{(r)})$ (cf.~Lemma~\ref{lmm:gradpot}), we conclude the proof of \eqref{eq:stable_recur}.

\section{Auxiliary Lemmas}

\begin{lemma} \label{lmm:smoothloc}
    Under Assumptions \ref{ass:bddf_dist}, \ref{ass:linsep_dist}, the loss function $\mathcal{L}(\mathbf{W})$ is $1/2$-smooth.
\end{lemma}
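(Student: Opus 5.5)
The plan is to bound the Hessian of $\mathcal{L}$ directly and show that its operator norm never exceeds $1/2$. Since $\mathcal{L}$ is a weighted average of the per-sample losses $\ell_{i,j}$ with weights $1/n$, it suffices to show that each $\ell_{i,j}$ is $1/2$-smooth. Convexity and averaging then preserve the bound.

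Fix a sample $(\mathbf{x},y)=(\mathbf{x}_{i,j},y_{i,j})$ and write $\mathbf{z}=\mathbf{W}\mathbf{x}\in\mathbb{R}^K$. Then $\ell(\mathbf{W})=\log\sum_k e^{z_k}-z_y$, which is a log-sum-exp of a linear map of $\mathbf{W}$ plus a linear term. The linear term contributes nothing to the Hessian. Writing $\mathbf{p}=\mathrm{softmax}(\mathbf{z})$ (so $p_k=\widehat{y}^{(k)}_{i,j|\mathbf{W}}$), the Hessian in $\mathbf{W}$ acts on a direction $\mathbf{H}\in\mathbb{R}^{K\times d}$ as
\[
\langle \mathbf{H},\nabla^2\ell(\mathbf{W})[\mathbf{H}]\rangle=(\mathbf{H}\mathbf{x})^\top\big(\mathrm{diag}(\mathbf{p})-\mathbf{p}\mathbf{p}^\top\big)(\mathbf{H}\mathbf{x}).
\]
Therefore
\[
\|\nabla^2\ell\|_{\rm op}\le \lambda_{\max}\big(\mathrm{diag}(\mathbf{p})-\mathbf{p}\mathbf{p}^\top\big)\,\|\mathbf{x}\|^2\le \lambda_{\max}\big(\mathrm{diag}(\mathbf{p})-\mathbf{p}\mathbf{p}^\top\big),
\]
where the last step uses $\|\mathbf{H}\mathbf{x}\|\le\|\mathbf{H}\|_F\|\mathbf{x}\|$ and Assumption \ref{ass:bddf_dist}.

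The main step, and the only real obstacle, is showing that $\lambda_{\max}(\mathrm{diag}(\mathbf{p})-\mathbf{p}\mathbf{p}^\top)\le 1/2$ for every probability vector $\mathbf{p}$. I would argue via the variance interpretation. For a unit vector $\mathbf{v}$, the quadratic form $\mathbf{v}^\top(\mathrm{diag}(\mathbf{p})-\mathbf{p}\mathbf{p}^\top)\mathbf{v}=\sum_k p_k v_k^2-(\sum_k p_kv_k)^2$ is the variance of a random variable taking value $v_k$ with probability $p_k$.

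To bound this variance, write it as $\frac12\sum_{k,l}p_kp_l(v_k-v_l)^2$ and apply $(v_k-v_l)^2\le 2(v_k^2+v_l^2)$. For the diagonal-free sum this gives at most $\sum_{k\neq l}p_kp_l(v_k^2+v_l^2)=2\sum_k p_k(1-p_k)v_k^2\le 2\max_k p_k(1-p_k)\le 1/2$.

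Alternatively, the Gershgorin bound gives row $k$ as $p_k(1-p_k)+p_k\sum_{l\ne k}p_l=2p_k(1-p_k)\le 1/2$. This is the cleaner route and the one I would write.

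Finally, Assumption \ref{ass:linsep_dist} is not needed for this bound; it is only listed for consistency with the paper's standing assumptions. Averaging the per-sample bound over all samples with weights $1/n$, and noting that the same argument applies to each $\mathcal{L}_i$ with weights $1/n_i$, yields $\|\nabla^2\mathcal{L}(\mathbf{W})\|_{\rm op}\le 1/2$ for all $\mathbf{W}$. This is the claimed $1/2$-smoothness, and it is also the $1/2$-smoothness of $\mathcal{L}_i$ used in the proof of Theorem \ref{thm:eos_average_loss_bound}.
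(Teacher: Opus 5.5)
Your proposal is correct and follows the paper's approach. The paper only says the lemma follows by bounding the eigenvalues of $\nabla^2\mathcal{L}(\mathbf{W})$, and the Hessian/Gershgorin computation it carries out in the proof of Lemma~\ref{lmm:selfbound} gives exactly your per-sample row bound $2\widehat{y}^{(k)}_{i,j|\mathbf{W}}(1-\widehat{y}^{(k)}_{i,j|\mathbf{W}})\le 1/2$ before averaging. Factoring through the $K\times K$ matrix $\mathrm{diag}(\mathbf{p})-\mathbf{p}\mathbf{p}^\top$ is a slightly cleaner justification of that Gershgorin step than the paper's blockwise version, and you are right that the argument also covers each $\mathcal{L}_i$ and does not need Assumption~\ref{ass:linsep_dist}.
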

The above lemma can be verified directly by evaluating the eigenvalues of $\nabla^2 {\cal L}( {\bf W})$.

\begin{lemma}\label{lmm:lipcon}
If $F_{m}$ is twice continuously differentiable, then for any $\mathbf{u},\mathbf{v}$ in a convex set $C=\left\{(1-t)\mathbf{u}+t\mathbf{v}:t\in(0,1)\right\}$, we have
    \[
    \| \nabla F_{m}(\mathbf{u}) - \nabla F_{m}(\mathbf{v}) \|\leq\left( \underset{\mathbf{w} \in C}{\max} \| \nabla^{2}F_{m}(\mathbf{w}) \| \right) \| \mathbf{u} - \mathbf{v} \|.
    \]
\end{lemma}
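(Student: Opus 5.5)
The statement to prove is Lemma~\ref{lmm:lipcon}: a mean-value (Lipschitz) bound for the gradient along a segment. I would prove it with the fundamental theorem of calculus applied to the gradient along the segment from $\mathbf{v}$ to $\mathbf{u}$. Define $\mathbf{w}(t) := \mathbf{v} + t(\mathbf{u}-\mathbf{v})$ for $t \in [0,1]$. Since $F_m$ is twice continuously differentiable, the map $t \mapsto \nabla F_m(\mathbf{w}(t))$ is continuously differentiable, with derivative $\nabla^2 F_m(\mathbf{w}(t))(\mathbf{u}-\mathbf{v})$. Integrating componentwise gives
\[
\nabla F_m(\mathbf{u}) - \nabla F_m(\mathbf{v}) = \int_0^1 \nabla^2 F_m(\mathbf{w}(t))\,(\mathbf{u}-\mathbf{v})\,dt .
\]

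Next I take norms and move the norm inside the integral, which is valid for Bochner/Riemann integrals of continuous vector-valued functions. Then I use the operator-norm inequality $\|\nabla^2 F_m(\mathbf{w}(t))(\mathbf{u}-\mathbf{v})\| \leq \|\nabla^2 F_m(\mathbf{w}(t))\|\,\|\mathbf{u}-\mathbf{v}\|$. This yields
\[
\|\nabla F_m(\mathbf{u}) - \nabla F_m(\mathbf{v})\| \leq \Big(\sup_{t\in[0,1]} \|\nabla^2 F_m(\mathbf{w}(t))\|\Big)\,\|\mathbf{u}-\mathbf{v}\|.
\]

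The only delicate point is that the statement takes the max over the open segment $C$ ($t \in (0,1)$), not the closed one. The integrand differs from its values on the open interval only at the two endpoints $t=0,1$, which have measure zero. So the integral bound really involves only $\sup_{t\in(0,1)}$, and by continuity this supremum equals the value on the closure. In this sense the ``max'' is attained on $\overline{C}$ and coincides with the supremum over $C$. In the degenerate case $\mathbf{u}=\mathbf{v}$ both sides vanish, so the inequality holds trivially.

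In the applications (e.g.\ bounding $\|\mathbf{E}^{(r)}\|_F$), the matrix setting is handled by vectorizing $\mathbf{W}$. Under vectorization the Frobenius norm becomes the Euclidean norm and $\|\cdot\|_{\rm op}$ becomes the spectral norm of the Hessian, so no further work is needed. I do not expect any real obstacle. The one step to watch is the endpoint/sup-versus-max issue, which the continuity argument above resolves.
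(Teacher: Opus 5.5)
Your proposal is correct and follows the same route as the paper, which justifies this lemma in one line as a direct consequence of Taylor's theorem for vector-valued functions; your integral representation $\nabla F_m(\mathbf{u})-\nabla F_m(\mathbf{v})=\int_0^1 \nabla^2 F_m(\mathbf{w}(t))(\mathbf{u}-\mathbf{v})\,dt$ is exactly that theorem written out. Your remark that the ``max'' over the open segment should be read as a supremum, which by continuity equals the maximum over the closed segment, is a useful clarification that the paper leaves implicit.
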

The above lemma is a direct consequence of Taylor's theorem for vector-valued functions.
  
\section{Proofs of Lemmas in Theorem \ref{thm:stablep}} \label{app:pf_lemmas}

\subsection{Proof of Lemma \ref{lmm:selfbound}}
We first need to bound the gradient norm with relation to the loss. Observe that
\begin{align*}
& \textstyle \lVert \nabla \ell_{i,j} (\mathbf{W})\rVert^2_F =\sum_{k=1}^K \lVert \nabla _{\mathbf{w}_k}\ell_{i,j} (\mathbf{W})\rVert^2 \\
& \textstyle \leq \lVert \mathbf{x}_{i,j} \rVert^2\Big((1-\widehat{y}_{i,j | {\bf W} })^2+\sum_{k \neq y_{i,j}} (\widehat{y}_{i,j | {\bf W}}^{(k)})^2\Big) \\
& \leq K(1-\widehat{y}_{i,j | {\bf W} } )^2\|\mathbf{x}_{i,j}\|^2 \leq K(1-\widehat{y}_{i,j | {\bf W} } )^2.\numberthis
\end{align*}
Since $-\log z\geq 1-z$ for $z\in(0,1]$, we have 
\begin{equation}
    \ell_i(\mathbf{W})= - \log\widehat{y}_{i,j | {\bf W} }  \geq 1-\widehat{y}_{i,j | {\bf W} },
\end{equation} 
\begin{equation}
    \lVert \nabla \ell_i(\mathbf{W})\rVert_F\leq \sqrt{K}(1-\widehat{y}_{i,j | {\bf W} })\leq \sqrt{K}\ell_i(\mathbf{W}).
\end{equation}
Henceforth, 
\begin{align*}
& \textstyle \lVert \nabla\mathcal{L}(\mathbf{W})\rVert_F\leq \frac{1}{n}\sum_{i=1}^m \sum_{j=1}^{n_i} \lVert \nabla \ell_{i,j}(\mathbf{W})\rVert_F\\
& \textstyle \leq \sqrt{K}\cdot \frac{1}{n}\sum_{i=1}^m \sum_{j=1}^{n_i} \ell_{i,j}(\mathbf{W})=\sqrt{K}\mathcal{L}(\mathbf{W}).\numberthis
\end{align*}

Next, we bound the Hessian. Observe that for the classes $k, l\in[K]$, we have 
\begin{equation}
    \nabla^2_{\mathbf{w}_k, \mathbf{w}_l}\ell_{i,j} (\mathbf{W})=
    \begin{cases}
        \widehat{y}_{i,j | {\bf W} }^{(k)} (1-\widehat{y}_{i,j | {\bf W} }^{(k)})\mathbf{x}_{i,j}\mathbf{x}_{i,j}^\top,& \text{if }k=l,\\
        -\widehat{y}_{i,j | {\bf W} }^{(k)}\widehat{y}_{i,j | {\bf W} }^{(l)}\mathbf{x}_{i,j}\mathbf{x}_{i,j}^\top,& \text{if }k\neq l.
    \end{cases}
\end{equation}
For the diagonal block of the Hessian, we have 
\begin{align*}
\|\nabla^2_{\mathbf{w}_k,\mathbf{w}_k}\ell_{i,j}(\mathbf{W})\|_{\text{op}}&=\widehat{y}_{i,j | {\bf W} }^{(k)}(1-\widehat{y}_{i,j | {\bf W} }^{(k)})\|\mathbf{x}_{i,j}\|^2\\
&\leq \widehat{y}_{i,j | {\bf W} }^{(k)}(1-\widehat{y}_{i,j | {\bf W} }^{(k)}).\numberthis\end{align*}
For the off-diagonal blocks on row $k$, we have 
\begin{align*}
    \sum_{l\neq k}\lVert \nabla^2_{\mathbf{w}_k,\mathbf{w}_l} \ell_{i,j}(\mathbf{W})\rVert_{\text{op}}&\leq \sum_{l\neq k}\widehat{y}_{i,j | {\bf W} }^{(k)}\widehat{y}_{i,j | {\bf W} }^{(l)}\|\mathbf{x}_{i,j}\|^2\\
    &=\widehat{y}_{i,j | {\bf W} }^{(k)}(1-\widehat{y}_{i,j | {\bf W} }^{(k)}).\numberthis
\end{align*}
Now, we use the Gershgorin Circle Theorem to get
\begin{align*}
& \lVert \nabla^2 \ell_{i,j} (\mathbf{W})\rVert_{\text{op}} \\& \leq \max_{k} \Big(\lVert \nabla^2_{\mathbf{w}_k,\mathbf{w}_k} \ell_{i,j}(\mathbf{W})\rVert_{\text{op}}+\sum_{l\neq k}\lVert \nabla^2_{\mathbf{w}_k,\mathbf{w}_l} \ell_{i,j}(\mathbf{W})\rVert_{\text{op}} \Big) \\
& \leq\max_k 2\, \widehat{y}_{i,j | {\bf W} }^{(k)}(1-\widehat{y}_{i,j | {\bf W} }^{(k)}).\numberthis
\end{align*} 
If $k=y_{i,j}$, then 
\begin{equation}
    \widehat{y}_{i,j | {\bf W} }^{(k)}(1-\widehat{y}_{i,j | {\bf W} }^{(k)})\leq 1-\widehat{y}_{i,j | {\bf W} } \leq -\log \widehat{y}_{i,j | {\bf W} } =\ell_{i,j}(\mathbf{W}).
\end{equation} 
Otherwise, we have 
\begin{align*}
    \widehat{y}_{i,j | {\bf W} }^{(k)} & \leq 1-\widehat{y}_{i,j | {\bf W} } \leq \ell_{i,j}(\mathbf{W}) ,\numberthis \\
    \widehat{y}_{i,j | {\bf W} }^{(k)}(1-\widehat{y}_{i,j | {\bf W} }^{(k)}) & \leq (1-\widehat{y}_{i,j | {\bf W} } )(1-\widehat{y}_{i,j | {\bf W} }^{(k)}) \\
    &\leq 1-\widehat{y}_{i,j | {\bf W} } \leq \ell_{i,j}(\mathbf{W}).\numberthis
\end{align*}
This leads to the conclusion: 
\begin{align*}
    & \textstyle \lVert \nabla^2\mathcal{L}(\mathbf{W})\rVert_{\text{op}}  \leq \frac{1}{n}\sum_{i=1}^m \sum_{j=1}^{n_i} \lVert \nabla^2\ell_{i,j}(\mathbf{W})\rVert_{\text{op}} \\
    & \textstyle \leq \frac{1}{n}\sum_{i=1}^m \sum_{j=1}^{n_i} 2\ell_{i,j}(\mathbf{W})=2\mathcal{L}(\mathbf{W}).\numberthis
\end{align*}

\subsection{Proof of Lemma \ref{lmm:gradpot}}
By Lemma \ref{lem:linsep}-(ii), we observe that
\begin{align*}
& \langle \nabla \mathcal{L}_i(\mathbf{W}), \mathbf{W}_\star\rangle \leq -\frac{\gamma}{n_i}\sum_{j=1}^{n_i}\left(1- \widehat{y}_{i,j | {\bf W}} \right) =-\gamma \mathcal{G}_i(\mathbf{W}).\numberthis
\end{align*}
Since $\mathcal{G}_i(\mathbf{W}) \geq 0$, we proceed by
\begin{align*}
& \langle \mathbf{W}^{(r+1)}, \mathbf{W}_\star\rangle \\
& = \langle \mathbf{W}^{(r)}, \mathbf{W}_\star\rangle -\frac{\bar{\eta}}{n}\sum_{i=1}^m \sum_{\tau=1}^{T_i} n_i \langle \nabla\mathcal{L}_i(\mathbf{D}^{(r)}_{i,\tau-1}), \mathbf{W}_\star\rangle \\
& \geq \langle \mathbf{W}^{(r)}, \mathbf{W}_\star\rangle + \frac{\bar{\eta} \gamma}{n} \sum_{i=1}^m \sum_{\tau=1}^{T_i} n_i \mathcal{G}_i(\mathbf{D}^{(r)}_{i,\tau-1}) \\
& \geq \langle \mathbf{W}^{(r)}, \mathbf{W}_\star\rangle +\frac{\bar{\eta} \gamma}{n} \mathcal{G}(\mathbf{W}^{(r)}).\numberthis
\end{align*} 
This yields
\begin{align*}
& \frac{1}{R}\sum_{r=0}^{R-1} \mathcal{G}(\mathbf{W}^{(r)})  \leq \frac{\langle \mathbf{W}^{(R)}, \mathbf{W}_\star\rangle_F-\langle \mathbf{W}^{(0)}, \mathbf{W}_\star\rangle_F}{\gamma\eta R} .\numberthis
\end{align*} 
Eq.~\eqref{eq:gradpot_bound} is then obtained using \eqref{eq:param_bound} and $\|\mathbf{W}_\star\|_F \le 1$.

By Lemma \ref{lem:linsep}-(ii), we also have
\begin{align*}
    &\langle \nabla \mathcal{L}(\mathbf{W}), \mathbf{W}_\star\rangle \leq
    -\gamma \mathcal{G}(\mathbf{W}).\numberthis
\end{align*}
Since ${\cal G}( {\bf W} ) \geq 0$, we have $|\langle \nabla \mathcal{L}(\mathbf{W}), \mathbf{W}_\star\rangle | \geq \gamma {\cal G} ( {\bf W} )$. 
By Cauchy-Schwarz inequality, this implies
\begin{align*}
    \lVert \nabla \mathcal{L}(\mathbf{W})\rVert_F & \geq \lVert \nabla \mathcal{L}(\mathbf{W})\rVert_F\cdot \lVert\mathbf{W}_\star\rVert_F\\
    & \geq |\langle \nabla \mathcal{L}(\mathbf{W}), \mathbf{W}_\star\rangle|\geq \gamma \mathcal{G}(\mathbf{W}).\label{eqn:nablaLnG}\numberthis
\end{align*}
Suppose that in addition, $\mathcal{G}(\mathbf{W})\leq 1/(2n)$, then since each term $(1-\widehat{y}_{i,j | {\bf W}})$ is non-negative, we must have 
\begin{equation}
    1-\widehat{y}_{i,j | {\bf W}}\leq n\cdot \mathcal{G}(\mathbf{W})\leq 1/2
\end{equation} 
for all $i, j$. Thus, $\widehat{y}_{i,j | {\bf W}} \geq 1/2$ for all $i,j$.
For any $z\in(0,1]$, we have the inequality
$-\log (z)\leq (1-z)/z$ and thus
\begin{align*}
\ell_{i_j}(\mathbf{W}) & \leq \frac{1-\widehat{y}_{i,j | {\bf W}}}{\widehat{y}_{i,j | {\bf W}}}\leq 2\left(1-\widehat{y}_{i,j | {\bf W}}\right). \numberthis
\end{align*}
Henceforth, we can derive that 
\begin{align*}
    \mathcal{L}(\mathbf{W}) & \leq \frac{1}{n}\sum\limits_{i=1}^m \sum\limits_{j=1}^{n_i} 2\left(1-\widehat{y}_{i,j | {\bf W}}\right)=2\mathcal{G}(\mathbf{W})\numberthis,
\end{align*} and when combined with (\ref{eqn:nablaLnG}), this implies that
$\|\nabla\mathcal{L}(\mathbf{W})\|_F\geq \gamma \mathcal{G}(\mathbf{W})\geq \frac{\gamma}{2}\mathcal{L}(\mathbf{W})$.

\section{Additional Experiments}

\subsection{Hard Sample Placement on Heterogeneous Devices}

The heterogeneity condition in Theorem~\ref{thm:stabled} is usually sufficient but conservative, as seen in Section~\ref{sec:exp}. In this experiment, we argue for the necessity of imposing such a condition by considering the interaction of the misaligned gradient updates among clients.

The basic setting of the empirical experiment follows the description as in Section \ref{sec:exp}, except that we generate $n_{\text{hard}} = 625$ hard samples with margin in $[\gamma, 2\gamma]$, where $\gamma = 0.001$, and $n_{\text{easy}} = 4375$ easy samples with margin at least $0.1$. 

All $m=8$ clients perform $T_i=57$ local SGD steps in the homogenerous setup, and clients $1$-$4$ perform $T_i=13$ and clients $5$-$8$ perform $T_i=101$ in the heterogeneous setup, yielding $T_\mathrm{avg}=57$ in both cases. 

The $625$ hard samples are assigned entirely to a single client. The $4375$ easy samples are partitioned equally among the remaining seven clients. Every client therefore holds exactly $625$ samples. We therefore consider three configurations (sharing the same set of data with different distributions) and the same average local work $T_{\mathrm{avg}} = 57$:
\begin{itemize}
    \item \texttt{hardLowT}: using the heterogeneous setup, we assign the hardest samples to an arbitrarily selected straggler with $T_i = 13$.
    \item \texttt{hardHighT}: using the heterogeneous setup, we assign the hardest samples to an arbitrarily selected fast client with $T_i = 101$.
    \item \texttt{homo}: using the homogeneous setup, we assign the hardest sampples to an arbitrarily selected client with $T_i=57$.
\end{itemize}

The \texttt{FedAvg} algorithm is then run for $R = 1,200,000$ communication rounds, with global stepsizes $\eta_g \in \{32, 64\}$ and local stepsize $\eta_l = 1.0$. 

\begin{figure}[!t]
    \centering
    \includegraphics[width=1.0\linewidth]{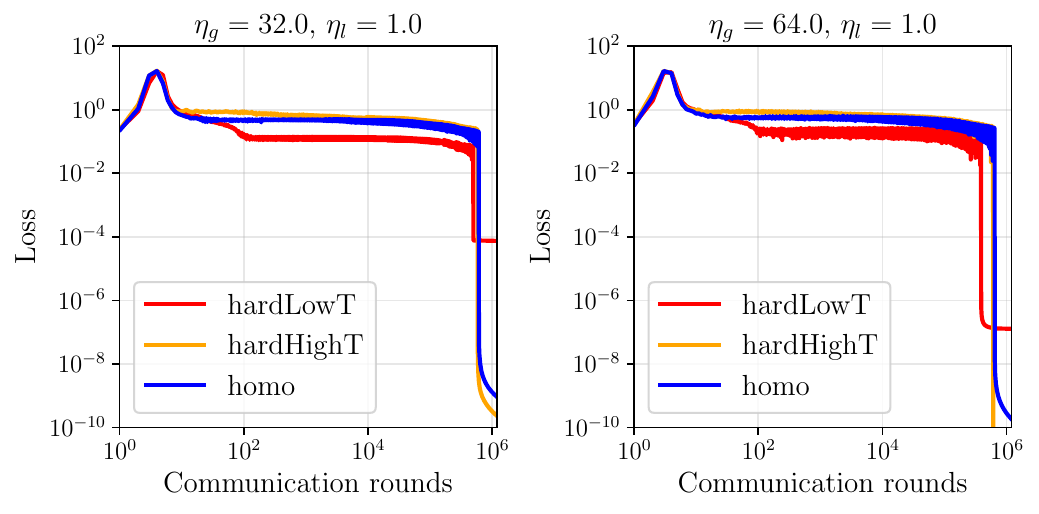}
    \caption{Effect of hard-sample placement under heterogeneous local step counts. All configurations share the 
    same set of data and the same average local work $T_{\mathrm{avg}} = 57$. The hardest samples are assigned to either straggler clients with $T_i = 13$ (\texttt{hardLowT}), fast clients with $T_i = 101$ (\texttt{hardHighT}) under the heterogeneous setup, or arbitrarily for the homogeneous setup where all $T_i = 57$ (\texttt{homo}). We use the global stepsizes $\eta_g \in \{32, 64\}$, local stepsize $\eta_l = 1$.}
    \label{fig:bulkhard}
\end{figure}

Fig.~\ref{fig:bulkhard} shows the loss progression for the three configurations, where the three configurations converge at different rates with \texttt{hardHighT} being the fastest, \texttt{homo57} being the intermediate, and \texttt{hardLowT} being significantly slower. While Corollary~\ref{thm:stabled} predicts a rate governed by $T_\mathrm{avg}$ alone, the observed ordering therefore reflects an effect the corollary does not capture.  In \texttt{hardHighT}, the hard client has $T_h=101$ and dominates the contribution to $\tilde{\mathbf{E}}$, which is largely aligned with $\nabla\mathcal{L}(\mathbf{E}^{(r)})$. This amplifies the effective descent direction and accelerates convergence relative to the homogeneous baseline. In \texttt{hardLowT} case, the hard client has $T_h=13$, where $\tilde{\mathbf{E}}$ is anti-aligned. The effective gradient is therefore reduced and the convergence rate decreases. In comparison, for the \texttt{homo57} case, there is no presence of client drift and the convergence rate is simply controlled by the same work conducted by each client. We hereby observe that the worst-case asymptotic rate is governed not by $T_\mathrm{avg}$ alone but by the alignment between step-count weights and gradient deviations. The rate can be either faster or slower than the homogeneous baseline, depending on the placement of the hard sample influencing the extent of the gradient alignment, justifying the necessity of the heterogeneity condition.

\subsection{Effect of Large Stepsize under Minibatch SGD Local Updates}

The concurrent work of Emmanouilidis et al. \cite{emmanouilidis2026sgdedgestabilitystochastic} shows that in the centralized setting, SGD on the multiclass cross-entropy loss self-stabilizes and converges in the best-iterate sense even with large stepsizes. A natural question is whether the stochastic stabilization mechanism is preserved in the federated setting, where the local updates are performed on heterogeneous data with heterogeneous number of local steps. To this end, we provide empirical evidence that the qualitative behavior predicted by our theory persists under local SGD.

The basic setting of the empirical experiment follows the description as in Section \ref{sec:exp}. We partition the synthetic dataset i.i.d. uniformly among $m=8$ clients, and we again consider the homogeneous configuration and heterogeneous configuration. All $8$ clients perform $T_i=57$ local SGD steps in the homogenerous configuration; in the heterogeneous configuration, clients $1$-$4$ perform $T_i=13$ and clients $5$-$8$ perform $T_i=101$, yielding $T_\mathrm{avg}=57$ in both cases.

We then consider batch sizes $b\in\{5,25,125\}$, stepsizes $\eta_g\in\{32,64\}$, $\eta_l=1$, and run \texttt{FedAvg} for $R=500,000$ rounds. At the beginning of each communication round, each client draws a fresh random permutation of its local dataset and partitions it into minibatches of size $b$. The client then performs $T_i$ local SGD updates, consuming one minibatch per step. Since $b$ divides the local dataset size exactly, every epoch boundary coincides with the end of a minibatch. When the permutation is exhausted, the client reshuffles and continues.

As shown in Fig.~\ref{fig:sgd}, for a sufficiently large $R$, the homogeneous and heterogeneous curves become asymptotically indistinguishable when $T_\mathrm{avg}$ is matched. Increasing the batch size shortens the transient for most cases, but it does not affect the asymptotic rate. Extending the centralized analysis of \cite{emmanouilidis2026sgdedgestabilitystochastic} to the federated setting is an interesting direction for future work.

\begin{figure}[!t]
    \centering
    \includegraphics[width=1.0\linewidth]{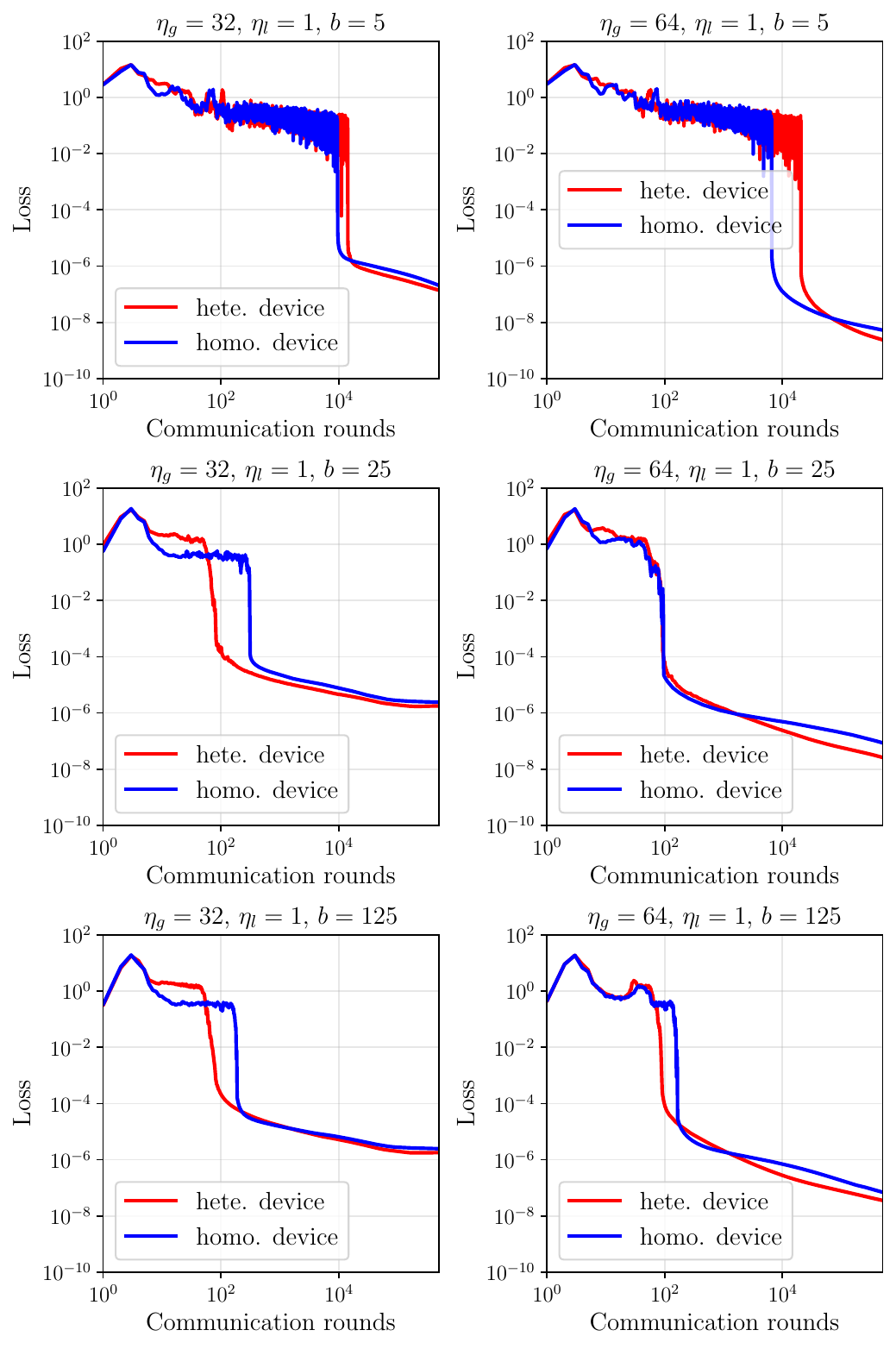}
    \caption{Effect of heterogeneous local update steps under minibatch SGD. All devices perform local updates using minibatches of size $b\in\{5, 25,125\}$, with global stepsizes $\eta_g \in \{32, 64\}$ and local stepsize $\eta_l = 1$. Two configurations are compared at a matched average $T_{\mathrm{avg}} = 57$: homogeneous ($T_i = 57$ for all devices) and heterogeneous ($T_i = 13$ for devices 1-4, $T_i = 101$ for devices 5-8).}
    \label{fig:sgd}
\end{figure}

Our result shows that \texttt{FedAvg} with local SGD exhibits the same qualitative behavior as local GD, where the iterates enter a stable phase while the loss decreases monotonically with an asymptotic rate governed by $T_\mathrm{avg}$.

\end{document}